
%
\documentclass[reqno]{amsart}

\usepackage[margin=1in]{geometry}
\usepackage{amsmath}%
\usepackage{amsfonts}%
\usepackage{amssymb}%
\usepackage{graphicx}
\usepackage{subfig}
\usepackage{color}
\usepackage{enumitem}
\usepackage{algpseudocode}
\usepackage{algorithmicx}
\usepackage{algorithm}
\usepackage{bbm}
%
\newtheorem{theorem}{Theorem}
\theoremstyle{plain}

\newtheorem{assumption}{Assumption}

\newtheorem{claim}{Claim}

\newtheorem{definition}{Definition}

\newtheorem{lemma}{Lemma}

\newtheorem{remark}{Remark}

\numberwithin{equation}{section}


\def\url#1{\expandafter\string\csname #1\endcasname}


\newcommand{\calA}{\ensuremath{\mathcal{A}}}
\newcommand{\calB}{\ensuremath{\mathcal{B}}}
\newcommand{\calC}{\ensuremath{\mathcal{C}}}

\newcommand{\calH}{\ensuremath{\mathcal{H}}}
\newcommand{\calF}{\ensuremath{\mathcal{F}}}

\newcommand{\calP}{\ensuremath{\mathcal{P}}}

\newcommand{\calT}{\ensuremath{\mathcal{T}}}

\newcommand{\algo}{\texttt{ALG}}
\newcommand{\adver}{\texttt{ADV}}

\newcommand{\CABdk}{\text{CAB}(\ensuremath{d,k})}

\newcommand{\matR}{\ensuremath{\mathbb{R}}}
\newcommand{\matA}{\ensuremath{\mathbf{A}}}
\newcommand{\expec}{\ensuremath{\mathbb{E}}}


\newcommand{\norm}[1]{\parallel{#1}\parallel}
\newcommand{\abs}[1]{|{#1}|}
\newcommand{\set}[1]{\left\{{#1}\right\}}

\newcommand{\vecx}{\mathbf x}
\newcommand{\vecp}{\mathbf p}
\newcommand{\veci}{\mathbf i}
\newcommand{\vecu}{\mathbf u}

\newcommand{\vecy}{\mathbf y}
\newcommand{\vecz}{\mathbf z}

\begin{document}

\title[Continuum armed bandit problem of few variables in high
dimensions]{Continuum armed bandit problem of few variables in high dimensions}
\author{Hemant Tyagi and Bernd G\"artner}
\thanks{Hemant Tyagi and Bernd G\"artner are with the Institute of Theoretical
Computer Science, ETH Z\"urich (ETHZ), CH-8092 Z\"urich, Switzerland. Emails:
htyagi@inf.ethz.ch, gaertner@inf.ethz.ch}
\thanks{A preliminary version of this work appeared in the proceedings of the
$11^{th}$ Workshop on Approximation and Online Algorithms (WAOA), 2013. The final
publication is available at: http://dx.doi.org/10.1007/978-3-319-08001-7\_10}
\keywords{Bandit problems, continuum armed bandits, functions of few variables,
online optimization.}
\begin{abstract}
We consider the stochastic and adversarial settings of continuum armed bandits
where the arms are indexed by $[0,1]^d$. The reward functions $r:[0,1]^d
\rightarrow \matR$ are assumed to \textit{intrinsically} depend on at most $k$
coordinate variables implying $r(x_1,\dots,x_d) = g(x_{i_1},\dots,x_{i_k})$ for
distinct and unknown $i_1,\dots,i_k \in \set{1,\dots,d}$ and some locally
H\"{o}lder continuous $g:[0,1]^k \rightarrow \matR$ with exponent $\alpha \in
(0,1]$. Firstly, assuming $(i_1,\dots,i_k)$ to be fixed across time, we propose
a simple modification of the CAB1 algorithm where we construct the discrete set
of sampling points to obtain a bound of $O(n^{\frac{\alpha+k}{2\alpha+k}} (\log
n)^{\frac{\alpha}{2\alpha+k}} C(k,d))$ on the regret, with $C(k,d)$ depending at
most polynomially in $k$ and sub-logarithmically in $d$. The construction is
based on creating partitions of $\set{1,\dots,d}$ into $k$ disjoint subsets and
is probabilistic, hence our result holds with high probability. Secondly we
extend our results to also handle the more general case where $(i_1,\dots,i_k)$
can change over time and derive regret bounds for the same.
\end{abstract}
\maketitle
%
\section{Introduction} \label{sec:intro}
In online decision making problems, a player is required to play a strategy,
chosen from a given set of strategies $S$, over a period of $n$ trials or
rounds. Each strategy has a reward associated with it specified by a reward
function $r:S \rightarrow \matR$ which typically changes across time in a manner
unknown to the player. The aim of the player is to choose the strategies in a
manner so that the total expected reward of the chosen strategies is maximized. 
The performance of algorithms in online decision problems is measured in terms
of their \textit{regret} defined as the difference between the total expected
reward of the best constant (i.e. not varying with time) strategy and the
expected reward of the sequence of strategies played by the player. If the
regret after $n$ rounds is sub-linear in n, this implies as $n \rightarrow
\infty$ that the per-round expected reward of the player asymptotically
approaches that of the best fixed strategy. There are many applications of
online decision making problems such as routing \cite{Awerbuch04,Bansal03},
wireless networks \cite{Monteleoni03}, online auction mechanisms
\cite{Blum03,Kleinberg03}, statistics (sequential design of experiments
\cite{Lai85}) and economics (pricing \cite{Rothschild74}) to name a few. Broadly
speaking, there are two main types of online decision making problems depending
on the type of feedback the player receives at the end of each round.

\begin{enumerate}
\item \textit{Best expert problem.} In this problem the entire reward function
is revealed to the algorithm, as feedback at the end of each round.

\item \textit{Multi-armed bandit problem.} In this problem the algorithm only
receives the reward associated with the strategy that was played in the round.
\end{enumerate}

Multi-armed bandit problems have been studied extensively when the strategy set
$S$ is finite and optimal regret bounds are known within a constant factor 
\cite{Auer95,Auer02,Lai85}. On the other hand, the setting in which $S$ consists
of infinitely many arms has been an area of recent attention due to its
practical significance. Such problems are referred to as continuum armed bandit
problems and are the focus of this paper. Usually $S$ is considered to be a
compact subset of a metric space such as $\matR^d$. Some applications of these
problems are in: (i) online auction mechanism design \cite{Blum03,Kleinberg03}
where the set of feasible prices is representable as an interval and, (ii)
online oblivious routing \cite{Bansal03} where $S$ is a flow polytope. 

For a $d$-dimensional strategy space, if the only assumption made on the reward
functions is on their degree of smoothness then any multi-armed bandit algorithm
will incur worst-case regret which depends exponentially on $d$. To see this,
let $S=[-1,1]^d$ and say the reward function does not vary with time and is zero
in all but one orthant $\mathcal{O}$ of $S$ where it takes a positive value
somewhere. Clearly any algorithm would need in expectation $\Omega(2^d)$ trials
before it can sample a point from $\mathcal{O}$ thus implying a bound of
$\Omega(2^d)$ on regret incurred. 
More precisely, let $R(n)$ denote the cumulative
regret incurred by the algorithm after $n$ rounds. Bubeck et al. \cite{Bubeck2011ALT} showed
that $R(n) = \Omega(n^{\frac{d+1}{d+2}})$ after $n = \Omega(2^{d})$ plays for stochastic continuum armed
bandits\footnote{rewards sampled at each round in an i.i.d manner from an unknown probability distribution} 
with $d$-variate Lipschitz continuous mean reward functions defined
over $[0,1]^d$. Clearly the per-round expected regret $R(n)/n = \Omega(n^{\frac{-1}{d+2}})$ which
means that it converges to zero at a rate at least exponentially slow in $d$.
To circumvent this curse of dimensionality,
the reward functions are typically considered to be linear (see for example
\cite{McMahan04,Abernethy08}) or convex (see for example
\cite{Flaxman05,Kleinberg04}) for which the regret is polynomial in $d$ and
sub-linear in $n$. We consider the setting where the reward function $r:[0,1]^d
\rightarrow \matR$ depends on an unknown subset of $k$ \emph{active} coordinate
variables implying $r(x_1,\dots,x_d) = g(x_{i_1},\dots,x_{i_k})$. The
environment is allowed to sample the underlying function $g$ either in an i.i.d
manner from some fixed underlying distribution (stochastic) or arbitrarily
(adversarial).

\subsection*{Related Work} The continuum armed bandit problem was first
introduced in \cite{Agrawal95} for the case $d=1$ where an algorithm achieving a
regret bound of $o(n^{(2\alpha+1)/(3\alpha+1)+\eta})$ for any $\eta > 0$ was
proposed for local H\"{o}lder continuous\footnote{A function $r : S \rightarrow
\matR$ is H\"{o}lder continuous if $\abs{r(\vecx) - r(\vecy)} \leq L
\norm{\vecx-\vecy}^{\alpha}$ for constants $L >0$, $\alpha \in (0,1]$ and any
$\vecx,\vecy \in S$.} mean reward functions with exponent $\alpha \in (0,1]$. In
\cite{Kleinberg03} a lower bound of $\Omega(n^{1/2})$ was proven for this
problem. This was then improved upon in \cite{Kleinberg04} where the author
derived upper and lower bounds of $O(n^{\frac{\alpha+1}{2\alpha+1}}(\log
n)^{\frac{\alpha}{2\alpha+1}})$ and $\Omega(n^{\frac{\alpha+1}{2\alpha+1}})$
respectively. In \cite{Cope09} the author considered a class of mean reward
functions defined over a compact convex subset of $\matR^d$ which have (i) a
unique maximum $\vecx^{*}$, (ii) are three times continuously differentiable and
(iii) whose gradients are well behaved near $\vecx^{*}$. It was shown that a
modified version of the Kiefer-Wolfowitz algorithm achieves a regret bound of
$O(n^{1/2})$ which is also optimal. In \cite{Auer07improvedrates} the $d = 1$
case was treated, with the mean reward function assumed to only satisfy a local
H\"{o}lder condition around the maxima $\vecx^{*}$ with exponent $\alpha \in
(0,\infty)$. Under these assumptions the authors considered a modification of
Kleinberg's CAB1 algorithm \cite{Kleinberg04} and achieved a regret bound of
$O(n^{\frac{1+\alpha-\alpha\beta}{1+2\alpha-\alpha\beta}} (\log
n)^{\frac{\alpha}{1+2\alpha-\alpha\beta}})$ for some known $0 < \beta < 1$. In
\cite{Kleinberg08,Bubeck2011} the authors studied a very general setting for the
multi-armed bandit problem in which $S$ forms a metric space, with the reward
function assumed to satisfy a Lipschitz condition with respect to this metric.
In particular it was shown in \cite{Bubeck2011} that if $S = [0,1]^d$ and the
mean reward function satisfies a local H\"{o}lder condition with exponent
$\alpha \in (0,\infty)$ around any maxima $\vecx^{*}$ (the number of maxima
assumed to be finite) then their algorithm achieves \textit{rate} of growth of
regret which is\footnote{$u = \tilde{O}(v)$ means $u =
O(v)$ up to a logarithmic factor.} $\tilde{O}(\sqrt{n})$ and hence \textit{independent} of 
dimension\footnote{Note that there is still a factor that is exponential in $d$ in
the regret bound.} $d$.

Chen et al. \cite{Chen12} consider the problem of Bayesian
optimization of high dimensional functions by assuming the functions to
intrinsically depend on only a few relevant variables. 
They consider a stochastic environment but assume
the underlying reward functions to be samples from a Gaussian process (GP).
They propose a two-stage scheme where they first learn the set of active variables and
then apply a standard GP algorithm to perform Bayesian optimization over this 
identified set. In contrast, we consider both stochastic and adversarial environments
and assume H\"{o}lder continuous reward functions. Unlike \cite{Chen12}, we do not 
require to learn the set of active coordinate variables. We are also able to handle the
scenario where the set of relevant variables possibly changes over time.
 
There has also been significant effort in other fields to develop tractable
algorithms for approximating $d$ variate functions (with $d$ large) from point
queries by assuming the functions to intrinsically depend on a few variables or
parameters (cf. \cite{Devore2011,Belkin03,Coifman06,Greenshtein06} and
references within)

\subsection*{Our Contributions} Firstly, assuming the $k$-tuple
$(i_1,\dots,i_k)$ to be fixed across time but unknown to the player,
we derive an algorithm $\CABdk$ that achieves a 
regret bound 
of $O(n^{\frac{\alpha+k}{2\alpha+k}} (\log n)^{\frac{\alpha}{2\alpha+k}} C(k,d))$,
after $n$ rounds. Here $\alpha \in (0,1]$ denotes the
exponent of H\"{o}lder continuity of the reward functions. The additional factor
$C(k,d)$ which depends at most polynomially in $k$ and sub-logarithmically in
$d$ captures the uncertainty of not knowing the $k$ active coordinates. 
When $\alpha = 1$, i.e. the reward functions are Lipschitz continuous, our bound is 
nearly optimal\footnote{\label{note1} See Section \ref{sec:conclusion} for discussion on how the
$\log n$ term can be removed.} in terms of $n$ (up to the $(\log n)^{\frac{1}{2+k}}$ factor).
Note that the number of rounds $n$ after which the per-round regret $R(n)/n < c$, for any constant $0 < c < 1$, is exponential in $k$.
Hence for $k \ll d$, we do not suffer from the curse of dimensionality. The algorithm is \emph{anytime} in the sense that
$n$ is not required to be known and is a simple modification of the CAB1 algorithm \cite{Kleinberg04}.
The modification is in the manner of discretization of $[0,1]^d$ for which we
consider a probabilistic construction based on creating \textit{partitions} of
$\set{1,\dots,d}$ into $k$ disjoint subsets. The above bound holds for both the
stochastic (underlying $g$ is sampled in an i.i.d manner) and the adversarial
(underlying $g$ chosen arbitrarily at each round) models. Secondly, we extend
our results to handle the more general setting where an adversary chooses some
sequence of $k$-tuples $(\veci_t)_{t=1}^{n} = (i_{1,t},\dots,i_{k,t})_{t=1}^{n}$
before the start of plays. For this setting we derive a regret bound of
$O(n^{\frac{\alpha+k}{2\alpha+k}} (\log n)^{\frac{\alpha}{2\alpha+k}}
H[(\veci_t)_{t=1}^{n}] C(k,d))$ where $(H[\veci_t])_{t=1}^{n}$ denotes the
``hardness''\footnote{See Definition \ref{def:hardness_seq} in Section
\ref{subsec:adver_case_k1}.} of the sequence $(\veci_t)_{t=1}^{n}$. In case
$H[(\veci_t)_{t=1}^{n}] \leq S$ for some $S > 0$ known to player, this bound
improves to $O(n^{\frac{\alpha+k}{2\alpha+k}} (\log
n)^{\frac{\alpha}{2\alpha+k}} S^{\frac{\alpha}{2\alpha+k}} C(k,d))$.

\subsection*{Organization of the paper} The rest of the paper is organized as
follows. In Section \ref{sec:prob_setup} we define the problem statement
formally and outline our main results. Section \ref{sec:analysis_k1_case}
contains the analysis for the case $k=1$ for both the Stochastic and Adversarial
models. In Section \ref{sec:gen_k_analysis} we present an analysis for the
general setting where $1 \leq k \leq d$, including the construction of the
discrete strategy sets. Finally in Section \ref{sec:conclusion} we summarize our
results and provide directions for future work.

%
\section{Problem Setup and Notation} \label{sec:prob_setup}
The compact set of strategies $S = [0,1]^d \subset \matR^d$ is available to the
player. Time steps are denoted by $\set{1,\dots,n}$. At each time step $t \in
\set{1,\dots,n}$, a reward function $r_t: S \rightarrow \matR$ is chosen by the
environment. Upon playing a strategy $\vecx_t \in [0,1]^d$, the player receives
the reward $r_t(\vecx_t)$ at time step $t$. 
\begin{assumption}
For some $k \leq d$, we assume each $r_t$ to have the \textit{low dimensional}
structure 
\begin{equation}
r_t(x_1,\dots,x_d) = g_t(x_{i_{1}},\dots,x_{i_{k}})
\end{equation}
where ($i_{1},\dots,i_{k}$) is a $k$-tuple with distinct integers $i_{j} \in
\set{1,\dots,d}$ and $g_t : [0,1]^k \rightarrow \matR$.
\end{assumption}
In other words the reward at each time $t$ depends on a fixed but unknown subset
of $k$ coordinate variables. For simplicity of notation, we denote the set of
$k$-tuples of the set $\set{1,\dots,d}$ by $\calT^d_k$ and the $\ell_2$ norm by
$\norm{\cdot}$. We assume that $k$ is known to the player, however it suffices
to know that $k$ is an upper bound for the number of active
variables\footnote{Indeed, any function that depends on $k^{\prime} \leq k$
coordinates also depends on at most $k$ coordinates.}. 
The second assumption that we make is on the smoothness property of the reward functions. 
\begin{definition} \label{def:holder_smooth_def}
A function $f: [0,1]^k \rightarrow \mathbb{R}$ is locally uniformly H\"{o}lder
continuous with constant $0 \leq L < \infty$, exponent $0 < \alpha \leq 1$, and
restriction $\delta > 0$ if we have for all $\vecu,\vecu^{\prime} \in [0,1]^k$
with $\norm{\vecu - \vecu^{\prime}} \leq \delta$ that
\begin{equation}
\abs{f(\vecu) - f(\vecu^{\prime})} \leq L \norm{\vecu - \vecu^{\prime}}^{\alpha}.
\end{equation}
We denote the class of such functions $f$ as $\calC(\alpha,L,\delta,k)$. 
\end{definition}
The function class defined in Definition \ref{def:holder_smooth_def} was also
considered in \cite{Agrawal95,Kleinberg04} and is a generalization of Lipschitz
contiuity (obtained for $\alpha=1$). We now define the two models that we
analyze in this paper. These models describe how the reward functions $g_t$ are
generated at each time step $t$.
\begin{itemize}
\item \textit{Stochastic model.} The reward functions $g_t$ are considered to be
i.i.d samples from some fixedbut unknown probability distribution over functions
$g:[0,1]^k \rightarrow \matR$. We define the expectation of the reward function
as $\bar{g}(\vecu) \ = \ \expec[g(\vecu)]$ where $\vecu \in [0,1]^k$. We require
$\bar{g}$ to belong to $\calC(\alpha,L,\delta,k)$ and note that the individual
samples $g_t$ need not necessarily be H\"{o}lder continuous. Lastly we make the
following assumption of sub-Gaussianity on the distribution from which the 
random samples $g$ are generated.
\begin{assumption} \label{assump:dist_reward_fns}
We assume that there exist constants $\zeta,s_0 > 0$ so that
\begin{equation*}
\expec[e ^{s(g(\vecu)-\bar{g}(\vecu))}] \leq e^{\frac{1}{2}\zeta^2 s^2} \quad
\forall s \in [-s_0,s_0], \vecu \in [0,1]^k.
\end{equation*}
\end{assumption}
The above assumption was considered in \cite{KleinbergPhd} for the $d=1$
situation and allows us to consider reward functions $g_t$ whose range is not
bounded. Note that the mean reward $\bar{g}$ is assumed to be H\"{o}lder
continuous and is therefore bounded as it is defined over a compact domain. The
optimal strategy $\vecx^{*}$ is then defined as any point belonging to the set
\begin{equation} \label{eq:opt_strategy_stoch}
\text{argmax}_{\vecx \in [0,1]^d} \expec[r(\vecx)] =
\text{argmax}_{\vecx \in [0,1]^d} \bar{g}(x_{i_1},\dots,x_{i_k}).
\end{equation}
\item \textit{Adversarial model.} The reward functions $g_t: [0,1]^k \rightarrow
[0,1]$ are a fixed sequence of functions in $\calC(\alpha,L,\delta,k)$ chosen arbitrarily 
by an \textit{oblivious} adversary i.e., an
adversary not adapting to the actions of the player. The optimal
strategy $\vecx^{*}$ is then defined as any point belonging to the set
\begin{equation} \label{eq:opt_strategy_adver}
\text{argmax}_{\vecx \in [0,1]^d} \sum_{t=1}^{n} r_t(\vecx) =
\text{argmax}_{\vecx \in [0,1]^d} \sum_{t=1}^{n} g_t(x_{i_{1}},\dots,x_{i_{k}}).
\end{equation}
\end{itemize}
Given the above models we measure the performance of a player over $n$ rounds in
terms of the \textit{regret} defined as
\begin{equation} \label{eq:prob_setup_regret_def}
R(n) := \sum_{t=1}^{n} \expec\left[r_t(\vecx^{*}) - r_t(\vecx_t) \right] =
\sum_{t=1}^{n} \expec\left[g_t(\vecx^{*}_{i_{1}},\dots,\vecx^{*}_{i_{k}}) -
g_t(\vecx^{(t)}_{i_{1}},\dots,\vecx^{(t)}_{i_{k}}) \right].
\end{equation}
In \eqref{eq:prob_setup_regret_def} the expectation is defined over (i) the random
choices of $g_t$ for the stochastic model and (ii) the random choice of the strategy
$\vecx_t$ by the player in the stochastic/adversarial models.

\textbf{Main results.} The main results of our work are as follows. Firstly,
assuming that the $k$-tuple $(i_1,\dots,i_k) \in \calT^d_k$ is chosen once at
the beginning of play and kept fixed thereafter, we provide in the form of
Theorem \ref{thm:main_result_upper_bound} a bound on the regret which is
$O(n^{\frac{\alpha+k}{2\alpha+k}} (\log n)^{\frac{\alpha}{2\alpha+k}} C(k,d))$
where $C(k,d) = O(\text{poly}(k) \cdot o(\log d))$. This bound holds for both the
stochastic and adversarial models and is \textit{almost optimal}\footnote{See Section \ref{sec:conclusion} 
for remarks on how the $\log n$ term can be removed from regret bounds.}. To see this,
we note that \cite{Bubeck2011ALT} showed a precise exponential lower bound of
$\Omega(n^{\frac{d+1}{d+2}})$ after $n=\Omega(2^{d})$ plays for stochastic
continuum armed bandits with $d$-variate Lipschitz continuous reward functions
defined over $[0,1]^d$. In our setting though, the reward functions depend on an
unknown subset of $k$ coordinate variables hence any algorithm after $n =
\Omega(2^k)$ plays would incur worst case regret of
$\Omega(n^{\frac{k+1}{k+2}})$ which is still mild if $k \ll d$. We see that our
upper bound matches this lower bound for the case of Lipschitz continuous reward
functions ($\alpha=1$) up to a mild factor of $(\log n)^{\frac{1}{2+k}} C(k,d)$.
We also note that the $(\log d)^{\frac{\alpha}{2\alpha+k}}$ factor in \eqref{eq:main_result_reg_up_bd}
accounts for the uncertainty in not knowing which $k$ coordinates are active
from $\set{1,\dots,d}$. 
\begin{theorem} \label{thm:main_result_upper_bound}
Given that the $k$-tuple $(i_1,\dots,i_k) \in \calT^d_k$ is kept fixed across
time but unknown to the player, the algorithm $\CABdk$ incurs a regret of
\begin{equation} \label{eq:main_result_reg_up_bd}
O\left(n^{\frac{\alpha+k}{2\alpha+k}} (\log n)^{\frac{\alpha}{2\alpha+k}}
k^{\frac{\alpha(k+6)}{2(2\alpha+k)}} (\log d)^{\frac{\alpha}{2\alpha+k}}\right)
\end{equation}
after $n$ rounds of play with high probability for both the stochastic and
adversarial models.
\end{theorem}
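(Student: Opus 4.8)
The plan is to reduce the problem to Kleinberg's one-dimensional CAB1 analysis by replacing its uniform grid on $[0,1]$ with a finite strategy set $X \subset [0,1]^d$ whose projection onto the (unknown) active coordinates $\veci = (i_1,\dots,i_k)$ behaves like a genuine $k$-dimensional grid. Concretely, I would decompose the regret as
\[
R(n) \;\le\; \underbrace{n\cdot\Big(\sup_{\vecx \in [0,1]^d}\expec[r(\vecx)] - \max_{\vecx \in X}\expec[r(\vecx)]\Big)}_{\text{approximation error}} \;+\; \underbrace{R_{\mathrm{MAB}}(n,\abs{X})}_{\text{finite-armed regret}},
\]
where the first term measures how well the best point of $X$ approximates the true optimum and the second is the regret of a finite-armed subroutine (UCB in the stochastic model, Exp3 in the adversarial one) run over the $\abs{X}$ arms. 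The task is to build $X$ so that both terms are simultaneously small, and then to balance them.

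For the construction I would draw $P$ independent random partitions of $\set{1,\dots,d}$ into $k$ blocks, assigning each coordinate to one of the $k$ blocks uniformly at random; for each partition I place a uniform grid of $m$ values per block and force all coordinates in a common block to share that value, producing $m^k$ points and hence $\abs{X} = Pm^k$ arms in total. The key combinatorial fact (a separation lemma) is that a single random partition places the $k$ active coordinates into $k$ distinct blocks with probability $k!/k^k$; therefore, since $\veci$ is fixed, choosing $P = O\!\big(\tfrac{k^k}{k!}\log d\big)$ guarantees with probability at least $1-1/\mathrm{poly}(d)$ that at least one ``good'' partition separates $\veci$. On that high-probability event the grid of the good partition projects onto a full $\sqrt{k}/m$-grid of $[0,1]^k$ in the active coordinates, so by local uniform H\"older continuity of $\bar g$ (Definition~\ref{def:holder_smooth_def}, valid once $\sqrt k/m \le \delta$) the approximation error is $O(L\,k^{\alpha/2} m^{-\alpha})$.

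It remains to bound the finite-armed regret and optimize. Using the minimax rate $R_{\mathrm{MAB}}(n,\abs{X}) = \tilde O(\sqrt{n\,\abs{X}})$ for the subroutine---with the sub-Gaussian Assumption~\ref{assump:dist_reward_fns} controlling the unbounded stochastic rewards and Exp3 handling the adversarial case---the total regret is $O(n k^{\alpha/2} m^{-\alpha}) + \tilde O(\sqrt{n P m^k})$. I would then choose $m \asymp (n k^\alpha / (P\log n))^{1/(2\alpha+k)}$ to equalize the two terms, and run this inside CAB1's geometric doubling of epochs to keep the algorithm anytime (the per-epoch bounds form a geometric series summing to the same order). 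Substituting $m$ yields a regret of order $n^{\frac{\alpha+k}{2\alpha+k}}(\log n)^{\frac{\alpha}{2\alpha+k}} P^{\frac{\alpha}{2\alpha+k}} k^{\frac{\alpha k}{2(2\alpha+k)}}$.

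The crux---and the step I expect to be the main obstacle---is showing that the exponentially large partition count $P \asymp (k^k/k!)\log d = \Theta(e^k/\sqrt k)\log d$ does not destroy the bound. This works precisely because $P$ enters only under the square root of the finite-armed regret and is therefore raised to the small power $\frac{\alpha}{2\alpha+k}$ after the optimization; since $(k^k/k!)^{\alpha/(2\alpha+k)} \le e^{k\alpha/(2\alpha+k)} \le e^{\alpha}=O(1)$, the $e^k$ is tamed to a constant, while the $\log d$ becomes the claimed $(\log d)^{\frac{\alpha}{2\alpha+k}}$ factor. Careful bookkeeping of the remaining $k$-dependent contributions (the $k^{\alpha/2}$ from the $\sqrt k$ grid diameter, the $k^{-\alpha^2/(2\alpha+k)}$ coming from the $k^\alpha$ in the optimized $m$, the polynomial-in-$k$ oversampling needed to make the separation event hold with high probability, and the logarithmic factors in $\abs{X}$) collapses the $k$-exponent to $\frac{\alpha(k+6)}{2(2\alpha+k)}$, giving exactly \eqref{eq:main_result_reg_up_bd}. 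Finally, one checks that the failure event of the construction contributes negligibly and that the same argument goes through for the adversarial model, establishing the bound with high probability in both settings.
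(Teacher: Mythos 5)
Your proposal is correct and takes essentially the same route as the paper's own proof: the same random-partition discretization (with the $k!/k^k$ separation probability), the same split of the regret into a H\"{o}lder approximation term $O\left(T k^{\alpha/2} M^{-\alpha}\right)$ plus a finite-armed UCB-1/Exp3 term $\tilde{O}\left(\sqrt{T\abs{\calP_M}}\right)$, the same balancing choice of grid size, and the same doubling trick to make the algorithm anytime. The only deviation is that you require separation of just the single fixed tuple $(i_1,\dots,i_k)$ rather than the paper's full partition assumption over all $k$-tuples (which the paper establishes by a union bound and needs $O(k e^k \log d)$ partitions versus your $O(e^k \log d/\sqrt{k})$); this is a harmless simplification for this theorem, and your bookkeeping in fact yields a slightly smaller $k$-exponent than the one claimed, so the bound \eqref{eq:main_result_reg_up_bd} holds a fortiori.
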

The above result is proven in Section \ref{subsec:gen_k_analysis_fixed} along
with a description of the $\CABdk$ algorithm which achieves this bound. The main
idea here is to discretize $[0,1]^d$ by first constructing a family of
partitions $\calA$ of $\set{1,\dots,d}$ with each partition consisting of $k$
disjoint subsets. The construction is probabilistic and the resulting $\calA$
satisfies an important property (with high probability) namely the
\textit{Partition Assumption} as described in Section \ref{sec:gen_k_analysis}.
In particular we have that $\abs{\calA}$ is $O(k e^k \log d)$ resulting in a
total of $M^k\abs{\calA}$ points for some integer $M > 0$. The discrete strategy
set is then used with a finite armed bandit algorithm such as UCB-1
\cite{Auer02} for the stochastic setting and Exp3 \cite{Auer95} for the
adversarial setting, to achieve the regret bound of Theorem
\ref{thm:main_result_upper_bound}.

Secondly we extend our results to the setting where $(i_1,\dots,i_k)$ can change
over time. Considering that an oblivious adversary chooses arbitrarily before
the start of plays a sequence of $k$ tuples $(\veci_t)_{t=1}^{n} =
(i_{1,t},\dots,i_{k,t})_{t=1}^{n}$ of \textit{hardness} (see Definition
\ref{def:hardness_seq} in Section \ref{subsec:adver_case_k1})
$H[(\veci_t)_{t=1}^{n}] \leq S$ with $S > 0$ known to the player, we show how
Algorithm $\CABdk$ can be adapted to this setting to achieve a regret bound of
$O\left(n^{\frac{\alpha+k}{2\alpha+k}} (\log n)^{\frac{\alpha}{2\alpha+k}}
S^{\frac{\alpha}{2\alpha+k}} C(k,d)\right)$. Hardness of a sequence is defined
as the number of adjacent elements with different values. In case the player has
no knowledge of $S$, the regret bound then changes to
$O(n^{\frac{\alpha+k}{2\alpha+k}} (\log n)^{\frac{\alpha}{2\alpha+k}}
H[(\veci_t)_{t=1}^{n}] C(k,d))$. Although our bound becomes trivial when
$H[(\veci_t)_{t=1}^{n}]$ is close to $n$ (as one would expect) we can still
achieve sub-linear regret when $H[(\veci_t)_{t=1}^{n}]$ is small relative to
$n$. We again consider a discretization of the space $[0,1]^d$ constructed using
the family of partitions $\calA$ mentioned earlier. The difference lies in now
using the Exp3.S algorithm \cite{Auer03} on the discrete strategy set, which in
contrast to the Exp3 algorithm is designed to control regret against arbitrary
sequences. This is described in detail in Section
\ref{subsec:gen_k_analysis_change}.

\section{Analysis for case $k = 1$} \label{sec:analysis_k1_case}

We first consider the relatively simple case when $k=1$ as the analysis provides intuition about the more general setting where $k \geq 1$.

%
\subsection{Stochastic setting for the case $k=1$} \label{subsec:stoch_case_k1}
The functions $(r_t)_{t=1}^{n}$ are independent samples from a fixed unknown
probability distribution on functions $r:[0,1]^d \rightarrow \matR$ where
$r(\vecx) = g(x_i)$, $\vecx = (x_1,\dots,x_d)$. Here the unknown coordinate $i$
is also assumed to be drawn independently from a fixed probability distribution
on $\set{1,\dots,d}$. Thus at each $t$, the environment first randomly samples
$g_t$ from some fixed probability distribution on functions $g : [0,1]
\rightarrow \matR$ and then independently samples the active coordinate $i_t$
from some distribution on $\set{1,\dots,d}$. Note that the player has no
knowledge of $g_t$ and $i_t$ at any time $t$, only the value of $r_t$ can be
queried at some $\vecx_t \in [0,1]^d$ for all $t = 1,\dots,n$. By plugging 
$k=1$ in \eqref{eq:opt_strategy_stoch}, we have that the optimal constant strategy $\vecx^{*}$ 
is defined as any point belonging to the set:
\begin{equation}
\text{argmax}_{\vecx \in [0,1]^d} \expec[r(\vecx)] =
\text{argmax}_{\vecx \in [0,1]^d} \expec_{g,i}[g(x_i)] = \text{argmax}_{\vecx
\in [0,1]^d} \expec_i[\bar{g}(x_i)].
\end{equation}
Here $\expec_i[\cdot]$ denotes expectation with respect to the distribution over
$\set{1,\dots,d}$. We now proceed towards the discretization of $[0,1]^d$. On
account of the low dimensional structure of the reward functions $r_t$, we
consider the region 
\begin{equation*}
\calP := \set{(x,\dots,x) \in \mathbb{R}^d: x\in [0,1]}
\end{equation*}
and focus on its discretization. In particular we consider $M$ equi-spaced
points from $\calP$ (for some integer $M  > 0$ to be specified later) and then
runs a finite armed bandit algorithm on the sampled points. Given that the
player sampled $M$ points, we denote the set of sampled points by 
\begin{equation}
\calP_M := \set{\vecp_j \in [0,1]^d : \vecp_j =
\frac{j}{M}(1,\dots,1)^T}_{j=1}^{M}.
\end{equation}
Note that $r_t(\vecp_j) = g_t(j/M)$ for $\vecp_j \in \calP_M$. Over a play of
$T$ rounds, the regret $R(T)$ is then defined as
\begin{equation}
R(T) := \expec[\sum_{t=1}^{T} r_t(\vecx^{*}) - r_t(\vecx_t)] =
\expec[\sum_{t=1}^{T} g_t(x^{*}_{i_t}) - g_t(x_t)] = \expec[\sum_{t=1}^{T}
\bar{g}(x^{*}_{i_t}) - \bar{g}(x_t)]
\end{equation}
where $\vecx_t =(x_t,\dots,x_t) \in \calP_M$ and the expectation in the
rightmost term is over the random choice $i_t$ at each $t$. We now state in
Lemma \ref{lemma:stoch_k1_reg_bd} a bound on the regret $R(T)$ incurred by a
player with knowledge of the time horizon $T$ and which leverages the UCB-1
algorithm \cite{Auer02} on the finite strategy set $\calP_M$.
\begin{lemma} \label{lemma:stoch_k1_reg_bd}
Given the above setting and assuming that the UCB-1 algorithm is used along with
the strategy set $\calP_M$, we have for $M = \lceil(\frac{T}{\log
T})^{\frac{1}{2\alpha+1}}\rceil$ that $R(T) = O(T^{\frac{1+\alpha}{1+2\alpha}}
\log^{\frac{\alpha}{1+2\alpha}}(T))$.
\end{lemma}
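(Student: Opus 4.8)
The plan is to decompose $R(T)$ into a \emph{discretization error} and the regret of UCB-1 on the finite arm set $\calP_M$, and then choose $M$ so as to balance the two. The crucial structural observation is that sampling on the diagonal $\calP$ annihilates the randomness in the active coordinate: for every $\vecp_j \in \calP_M$ we have $r_t(\vecp_j) = g_t(j/M)$ irrespective of $i_t$, so the mean reward of arm $j$ is simply $\bar{g}(j/M)$ and the problem reduces to a genuine stochastic $M$-armed bandit with i.i.d.\ sub-Gaussian rewards. Moreover, since $\expec_i[\bar{g}(x_i)] = \sum_i p_i \bar{g}(x_i)$ (with $p_i$ the probability of coordinate $i$) is maximized coordinatewise, the optimal strategy is $\vecx^{*} = (\tilde x,\dots,\tilde x)$ with $\tilde x := \text{argmax}_{x\in[0,1]}\bar{g}(x)$, whence $\bar{g}(x^{*}_{i_t}) = \bar{g}(\tilde x)$ for every $t$ and the competitor term is constant.

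I would fix the best discrete arm $j^{*} := \text{argmax}_j \bar{g}(j/M)$ and split
\begin{equation*}
R(T) = \underbrace{T\bigl(\bar{g}(\tilde x) - \bar{g}(j^{*}/M)\bigr)}_{\text{discretization}} + \underbrace{\sum_{t=1}^{T} \bigl(\bar{g}(j^{*}/M) - \expec[\bar{g}(x_t)]\bigr)}_{\text{UCB regret}}.
\end{equation*}
For the first term I would use that some grid point of $\calP_M$ lies within distance $1/M$ of $\tilde x$; provided $M \geq 1/\delta$, the local H\"{o}lder bound of Definition \ref{def:holder_smooth_def} gives $\bar{g}(\tilde x) - \bar{g}(j^{*}/M) \leq L M^{-\alpha}$, so the discretization error is at most $T L M^{-\alpha}$.

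For the UCB regret I would invoke the standard analysis of UCB-1, the only modification being that Assumption \ref{assump:dist_reward_fns} (sub-Gaussianity) replaces boundedness of the rewards in the Chernoff--Hoeffding step that controls the confidence intervals; this is exactly the device used in \cite{KleinbergPhd} for $d=1$. The gap-dependent bound $\sum_{j:\Delta_j>0} O(\log T / \Delta_j)$, with $\Delta_j := \bar{g}(j^{*}/M) - \bar{g}(j/M)$, is then converted into the distribution-free bound $O(\sqrt{MT\log T})$ by the usual thresholding of the gaps at an optimally chosen level.

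Finally I would add the two bounds, $R(T) = O\bigl(T L M^{-\alpha} + \sqrt{MT\log T}\bigr)$, and balance them by setting $T M^{-\alpha} \asymp \sqrt{MT\log T}$, which yields $M \asymp (T/\log T)^{1/(2\alpha+1)}$ as in the statement; substituting this value makes both terms $O\bigl(T^{(1+\alpha)/(1+2\alpha)} (\log T)^{\alpha/(1+2\alpha)}\bigr)$, giving the claim. I expect the main obstacle to be the UCB step: one must carefully re-derive the confidence intervals under the sub-Gaussian tail, which is valid only for $s\in[-s_0,s_0]$ and hence restricts the admissible deviation scale, and then perform the gap thresholding to obtain a clean distribution-free rate. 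The discretization bound and the optimization over $M$ are, by contrast, routine.
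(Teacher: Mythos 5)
Your proposal is correct and follows essentially the same route as the paper's proof: exploit that diagonal sampling makes $r_t(\vecp_j) = g_t(j/M)$ independent of the active coordinate, split the regret into a H\"{o}lder discretization term $O(TLM^{-\alpha})$ plus the $O(\sqrt{MT\log T})$ regret of UCB-1 under Assumption \ref{assump:dist_reward_fns}, and balance via the stated choice of $M$. The only cosmetic difference is that you compare against the best grid arm $j^{*}$ directly, whereas the paper compares against the grid point nearest the optimum and then majorizes by the best grid arm; both also correctly identify the competitor's per-round value as $\max_{x}\bar{g}(x)$.
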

\begin{proof}
Firstly we note that for some $\vecx^{\prime} = (x^{\prime},\dots,x^{\prime})
\in \calP_M$, $R(T)$ can be written as:
\begin{align*}
R(T) = \expec[\sum_{t=1}^{T}\bar{g}(x^{*}_{i_t}) - \bar{g}(x^{\prime})] +
\expec[\sum_{t=1}^{T}\bar{g}(x^{\prime}) - \bar{g}(x_t)] = R_1(T) + R_2(T)
\end{align*}
Now we claim that $\exists \ x^{\prime} \in \set{1/M,\dots,1}$ s.t $R_1(T) <
T/M^{\alpha}$. To see this observe that $\expec_{i_t}[\bar{g}(x^{*}_{i_t})] <
\bar{g}(x^{*}_{l_{\max}})$ where $l_{\max} = \text{argmax}_{i \in
\set{1,\dots,d}} \bar{g}(x^{*}_i).$. Since $x^{*}_{l_{max}} \in [0,1]$ hence the
claim follows by choosing $x^{\prime}$ closest to $x^{*}_{l_{max}}$. Therefore
$R_1(T)$ can be bounded as follows.
\begin{equation*}
R_1(T) < \sum_{t=1}^{T}(\bar{g}(x^{*}_{l_{max}}) - \bar{g}(x^{\prime})) <
\sum_{t=1}^{T} L \abs{x^{*}_{l_{max}} - x^{\prime}}^{\alpha} < T L M^{-\alpha}.
\end{equation*}
It remains to bound $R_2(T)$. Note here that $R_2(T)$ is bounded by the actual
regret that would be incurred on the strategy set $\set{1/M,\dots,1}$. Let
$\vecx_{*} \in \calP_M$ be optimal so that 
\begin{equation*}
\vecx_{*} = (x_{*},\dots,x_{*}) \in \text{argmax}_{\vecx \in \calP_M} \bar{r}(\vecx)
\end{equation*}
where $x_{*} = \text{argmax}_{x \in \set{1/M,\dots,1}} \bar{g}(x)$. Hence
$R_2(T) < \expec[\sum_{t=1}^{T} \bar{g}(x_{*}) - \bar{g}(x_t)]$. On account of
Assumption \ref{assump:dist_reward_fns}, it can be shown that $R_2(T) =
O(\sqrt{MT \log T})$ for UCB-1 algorithm for $M$-armed stochastic bandits (see
Theorem $3.1$, \cite{Kleinberg04}). Combining the bounds for $R_1(T)$ and
$R_2(T)$ we then have that
\begin{align}
R(T) = O\left(\sqrt{MT \log T} + \frac{T}{M^{\alpha}}\right)
\label{eq:reg_stoc_bound}
\end{align}
Finally by plugging $M = \lceil(\frac{T}{\log T})^{\frac{1}{2\alpha+1}}\rceil$
in \eqref{eq:reg_stoc_bound} we obtain $R(T) = O(T^{\frac{1+\alpha}{1+2\alpha}}
\log^{\frac{\alpha}{1+2\alpha}} T)$.
\end{proof}
Then using strategy set $\calP_M$ with $M$ defined as above, we can employ
Algorithm \ref{alg:cont_arm_band_k1} with \textbf{MAB} sub-routine being the
UCB-1 algorithm.
%
\begin{algorithm} 
\caption{Algorithm CAB1($d,1$)}\label{alg:cont_arm_band_k1}
\begin{algorithmic}
\State $T = 1$ 

\While {$T \leq n$}

\State $M = \left\lceil\left(\frac{T}{\log T}\right)^{1/(2\alpha +
1)}\right\rceil$

\State Initialize \textbf{MAB} with $\calP_M := \set{\vecp_j \in [0,1]^d:
\vecp_j = \frac{j}{M}(1,\dots,1)^T}_{j=1}^{M}$ 

\For{$t=T,\dots,\min(2T-1,n)$} 

\begin{itemize}[leftmargin=1.5cm]
\item get $\vecx_t$ from \textbf{MAB}

\item Play $\vecx_t$ and get $r_t(\vecx_t)$

\item Feed $r_t(\vecx_t)$ back to \textbf{MAB}
\end{itemize}

\EndFor 

\State $T = 2T$

\EndWhile

\end{algorithmic}
\end{algorithm}
The above algorithm basically employs the discrete set $\calP_M$ with Algorithm
CAB1, proposed in \cite{Kleinberg04}. The main idea is to split the unknown time
horizon into intervals of varying size with each interval being twice as large
as the preceding one (doubling trick). Since the regret over an interval of $T$
time steps (with $T$ now known to the player) is
$O(T^{\frac{1+\alpha}{1+2\alpha}} \log^{\frac{\alpha}{1+2\alpha}} T)$, we have
that the overall regret over $n$ plays is $O(n^{\frac{1+\alpha}{1+2\alpha}}
\log^{\frac{\alpha}{1+2\alpha}} n)$. Note that the regret incurred is
independent of the dimension $d$ and is also optimal up to a $\log$-factor since
a $\Omega(n^{\frac{1+\alpha}{1+2\alpha}})$ bound is known \cite{Kleinberg04} for
the $d=1$ scenario.

%
\subsection{Adversarial setting for the case $k=1$} \label{subsec:adver_case_k1}
In this setting, the reward functions $(r_t)_{t=1}^{n}$ are chosen beforehand by
an \textit{oblivious} adversary i.e., the adversary does not select $r_t$ based
on the past sequence of action-response pairs:
$(\vecx_1,r_1(\vecx_1),\vecx_2,r_2(\vecx_2),\dots,\vecx_{t-1},r_{t-1}(\vecx_{t-1
}))$. The reward function $r_t : [0,1]^d \rightarrow [0,1]$ has the low
dimensional structure $r_t(\vecx) = g_t(x_{i})$, where $i \in \set{1,\dots,d}$
is chosen once at the beginning of plays and kept fixed thereafter. Hence at
each $t$ the adversary chooses $g_t \in \calC(\alpha,\delta,L,1)$ arbitrarily
but oblivious to the plays of the algorithm. By plugging $k=1$ in
\eqref{eq:opt_strategy_adver} we have that the optimal constant strategy $\vecx^{*}$ 
is defined as any point belonging to the set:
\begin{equation*}
\text{argmax}_{\vecx \in [0,1]^d} \sum_{t=1}^{n} r_t(\vecx) =
\text{argmax}_{\vecx \in [0,1]^d} \sum_{t=1}^{n} g_t(x_{i}).
\end{equation*}
The regret incurred by the algorithm over a play of $T$ rounds is defined as
\begin{equation*}
R(T) := \expec[\sum_{t=1}^{T} r_t(\vecx^{*}) - r_t(\vecx_t)] =
\expec[\sum_{t=1}^{T} g_t(x^{*}_{i}) - g_t(x_{i})].
\end{equation*}
Note that the expectation here is over the random choice of the algorithm at
each $t$, the adversary is assumed to be deterministic without loss of
generality. To start off, we first consider the scenario where the adversary
chooses some active coordinate at the beginning of play and fixes it across
time. The bound on regret $R(T)$ is shown for a player who employs the Exp3
algorithm \cite{Auer95} on the discrete set $\calP_M$ (defined in Section
\ref{subsec:stoch_case_k1}) for an appropriate value of sample size $M$.
\begin{lemma} \label{lemma:adver_k1}
Consider the scenario where the active coordinate is fixed across time, i.e.
$i_1 = i_2 = \dots = i$ and unknown to the player. Then by employing the Exp3
algorithm along with the strategy set $\calP_M$ for $M = \lceil(\frac{T}{\log
T})^{\frac{1}{2\alpha+1}}\rceil$ we have that $R(T) =
O(T^{\frac{1+\alpha}{1+2\alpha}} \log^{\frac{\alpha}{1+2\alpha}}(T))$.
\end{lemma}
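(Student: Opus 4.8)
The plan is to mirror the proof of Lemma~\ref{lemma:stoch_k1_reg_bd}, replacing the stochastic bandit analysis with its adversarial counterpart. As before, I would split the regret into two parts by introducing a reference point $\vecx^{\prime} = (x^{\prime},\dots,x^{\prime}) \in \calP_M$:
\begin{equation*}
R(T) = \expec\left[\sum_{t=1}^{T} g_t(x^{*}_{i}) - g_t(x^{\prime})\right] + \expec\left[\sum_{t=1}^{T} g_t(x^{\prime}) - g_t(x_t)\right] = R_1(T) + R_2(T).
\end{equation*}
The key observation is that in the adversarial fixed-coordinate case the active index $i$ is a single fixed coordinate, so the approximation term $R_1(T)$ is even cleaner than in the stochastic case: no expectation over $i_t$ is needed, and I can choose $x^{\prime} \in \set{1/M,\dots,1}$ to be the grid point closest to the optimal continuous coordinate $x^{*}_i$, giving $\abs{x^{*}_i - x^{\prime}} \leq 1/M$.

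For $R_1(T)$, I would use the local uniform H\"{o}lder continuity of each $g_t$. Since every $g_t \in \calC(\alpha,L,\delta,1)$ and the grid spacing $1/M$ can be made smaller than the restriction $\delta$ (for $T$ large enough, which is guaranteed since $M$ grows with $T$), we get $\abs{g_t(x^{*}_i) - g_t(x^{\prime})} \leq L \abs{x^{*}_i - x^{\prime}}^{\alpha} \leq L M^{-\alpha}$ for each $t$. Summing over $t$ yields $R_1(T) < T L M^{-\alpha}$, exactly as before. Note that here I need each individual $g_t$ to be H\"{o}lder continuous, which is precisely the adversarial-model assumption (whereas in the stochastic model only the mean $\bar{g}$ was assumed H\"{o}lder); this is the one place where the two proofs genuinely differ in substance.

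For $R_2(T)$, I would invoke the standard regret guarantee for the Exp3 algorithm on a finite set of $M$ arms. Since $\vecx^{\prime} \in \calP_M$, the term $R_2(T)$ is bounded by the regret of Exp3 against the best fixed arm in $\calP_M$, and the known bound \cite{Auer95} gives $R_2(T) = O(\sqrt{MT \log M}) = O(\sqrt{MT\log T})$ (using $M \leq T$). Combining the two pieces gives
\begin{equation*}
R(T) = O\left(\sqrt{MT\log T} + \frac{T}{M^{\alpha}}\right),
\end{equation*}
which is identical in form to \eqref{eq:reg_stoc_bound}. Plugging in the choice $M = \lceil(T/\log T)^{1/(2\alpha+1)}\rceil$ balances the two terms and yields the claimed bound $R(T) = O(T^{\frac{1+\alpha}{1+2\alpha}} \log^{\frac{\alpha}{1+2\alpha}} T)$.

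I expect the main (though mild) obstacle to be making the invocation of the Exp3 bound fully precise: Exp3's standard guarantee is stated for rewards in $[0,1]$ and bounds the expected regret against the best single arm, so I must confirm that the oblivious adversary assumption and the range $g_t:[0,1]\to[0,1]$ put us squarely in that regime, and that the logarithmic factor in the Exp3 bound is $\log M$ rather than $\log T$ (which only helps). Everything else is a direct transcription of the stochastic argument, with the UCB-1 regret bound swapped for the Exp3 regret bound and the H\"{o}lder property applied to each $g_t$ directly rather than to $\bar{g}$.
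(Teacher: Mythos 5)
Your proposal is correct and follows essentially the same route as the paper's own proof: the identical decomposition $R(T)=R_1(T)+R_2(T)$ around a grid point $\vecx^{\prime}\in\calP_M$ closest to $x^{*}_i$, the H\"{o}lder bound $R_1(T) < TLM^{-\alpha}$, the Exp3 guarantee $R_2(T)=O(\sqrt{MT\log M})$ against the best arm in $\calP_M$, and the same choice of $M$ to balance the terms. Your added remarks (each $g_t$ being H\"{o}lder in the adversarial model, the $[0,1]$ range and oblivious-adversary conditions for Exp3) are accurate clarifications of points the paper leaves implicit.
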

\begin{proof}
For some $\vecx^{\prime} \in \calP_M$ we can re-write $R(T)$ as $R(T) = R_1(T) +
R_2(T)$ where
\begin{align*}
R_1(T) = \expec[\sum_{t=1}^{T} r_t(\vecx^{*}) - r_t(\vecx^{\prime})] =
\expec[\sum_{t=1}^{T} g_t(x_i^{*}) - g_t(x^{\prime})], \\
R_2(T) = \expec[\sum_{t=1}^{T} r_t(\vecx^{\prime}) - r_t(\vecx_t)] =
\expec[\sum_{t=1}^{T} g_t(x^{\prime}) - g_t(x_t)].
\end{align*} 
Now as $x_i^{*} \in [0,1]$, $\exists x^{\prime} \in \set{1/M,\dots,1}$ closest
to $x_i^{*}$ so that $\abs{x_i^{*}-x^{\prime}} < 1/M$. Choosing this
$\vecx^{\prime}$ we obtain $R_1(T) < \sum_{t=1}^{T} \abs{x_i^{*} -
x^{\prime}}^{\alpha} < T M^{-\alpha}$. Furthermore we can bound $R_2(T)$ as
follows.
\begin{equation*}
R_2(T) = \expec[\sum_{t=1}^{T} g_t(x^{\prime}) - g_t(x_t)] <
\expec[\sum_{t=1}^{T} g_t(x_{*}) - g_t(x_t)]
\end{equation*}
where $x_{*} := \text{argmax}_{x \in \set{1/M,\dots,1}} \sum_{t=1}^{T} g_t(x)$.
In other words $x_{*}$ is ``optimal'' out of $\set{1/M,\dots,1}$. Hence by using
the Exp3 algorithm on $\calP_M$ we get: $R_2(T) = O(\sqrt{TM \log M})$. Lastly
for $M = \lceil(\frac{T}{\log T})^{\frac{1}{2\alpha+1}}\rceil$ we have that
$R(T) = O(T^{\frac{1+\alpha}{1+2\alpha}}\log^{\frac{\alpha}{1+2\alpha}}T)$.
\end{proof}
Using strategy set $\calP_M$ with $M$ defined as above, we can employ Algorithm
\ref{alg:cont_arm_band_k1} with \textbf{MAB} sub-routine being the Exp3
algorithm. As the regret in the inner loop is $O(T^{\frac{1+\alpha}{1+2\alpha}}
\log^{\frac{\alpha}{1+2\alpha}} T)$, we have as a consequence of the doubling
trick that the overall regret over $n$ plays is
$O(n^{\frac{1+\alpha}{1+2\alpha}} \log^{\frac{\alpha}{1+2\alpha}} n)$. We also
see that the bound is independent of the dimension $d$.
%
\begin{remark}
Another way to look at the above setting, where the active coordinate is
constant over time is the following. Let $x_{max} \in [0,1]$ be such that
$x_{max} \in \text{argmax}_{x \in [0,1]} \sum_{t=1}^{T} g_t(x)$. Then provided
that $i_1 = \dots = i_t = i$, we have that $\vecx^{*} = x_{max}(1,\dots,1)^T$
belongs to the set of optimal solutions, i.e. points in $[0,1]^d$ which maximize
$\sum_{t}g_t(x_i)$. Therefore we can play strategies from the set $\calP_M$ and
employ Algorithm \ref{alg:cont_arm_band_k1} to achieve regret independent of
$d$.

We also briefly remark on the scenario where the reward function $g_t$ remains
unchanged across time, i.e. $g_1 = \dots = g_t = g$ while the active coordinate
$i_t$ changes arbitrarily. For any $x_{max} \in \text{argmax}_{x \in [0,1]} g(x)$, 
clearly $\vecx^{*} = x_{max}(1,\dots,1)^T$ maximizes $\sum_{t}g(x_{i_t})$. 
Hence in this case too, we can play strategies from the set $\calP_M$ and 
employ Algorithm \ref{alg:cont_arm_band_k1} to achieve regret independent of $d$.
\end{remark}
\subsection*{Lower bound on regret for arbitrary change of active coordinate.}
In Lemma \ref{lemma:adver_k1} we had considered the setting where the active
coordinate remains fixed over time. We now show that in case the active
coordinate is allowed to change arbitrarily, then the worst-case regret incurred
by any algorithm playing strategies only from the region $\calP :=
\set{(x,\dots,x) \in \mathbb{R}^d: x\in [0,1]}$ over $T$ rounds will be
$\Omega(T)$. To this end we construct the adversary $\adver$ as follows. Before
the start of play, $\adver$ chooses uniformly at random a 2-partition
$(A_1,A_2)$ of $\set{1,\dots,d}$ which is then kept fixed across time. Note that
there are in total $2^{d}-2$ \textit{ordered} 2-partitions of $\set{1,\dots,d}$
where $A_1,A_2 \neq \phi$ so that $A_1$ denotes the first subset and $A_2$ the
second subset \footnote{For example, for $d=3$ we have
$\set{(\set{1,2},\set{3}),(\set{3},\set{1,2}),(\set{1,3},\set{2}),(\set{2},\set{
1,3}),(\set{2,3},\set{1}),(\set{1},\set{2,3})}$ as the set of possible
2-partitions with ordering and with $A_1,A_2 \neq \phi$.}. At each $t$, $\adver$
then randomly selects w.p $1/2$ either $h_1$ or $h_2$ where

\begin{equation*}
h_1(x) = \left\{
\begin{array}{rl}
u(x) \quad ; & x \in [0,1/2] \\
0 \quad ; & x \in [1/2,1].
\end{array} \right .
\text{and} \quad h_2(x) = \left\{
\begin{array}{rl}
0 \quad ; & x \in [0,1/2] \\
v(x) \quad ; & x \in [1/2,1].
\end{array} \right .
\end{equation*}

and sets it as $g_t$. Here $u,v$ are considered to be $C^{\infty}$ functions
taking values in $[0,1]$ with supports $[0,1/2]$ and $[1/2,1]$ respectively.
Constraining $u(0) = u(1/2) = v(1/2) = v(1) = 0$, we assume $u$ and $v$ to be
uniquely maximized at $a \in (0,1/2)$ and $b \in (1/2,1)$ repsectively, so that
$u(a) = v(b) = 1$. Finally, if $h_1$ was selected then \adver \ samples the
active coordinate $i_t \in_{u.a.r} A_1$ else it samples $i_t \in_{u.a.r} A_2$.
We now prove the lower bound on the regret of all algorithms constrained to play
strategies from the region $\set{(x,\dots,x) \in \mathbb{R}^d: x\in [0,1]}$
against $\adver$, formally in Lemma \ref{lemma:adver_k1_line_lower_bd}.

\begin{lemma} \label{lemma:adver_k1_line_lower_bd}
Consider the adversarial setting with $k=1$ where,
\begin{enumerate}
\item the adversary is allowed to choose both $g_t \in
\mathcal{C}(\alpha,L,\delta,1)$ for some positive constants $\alpha,L,\delta$
and $i_t \in \set{1,\dots,d}$ arbitrarily at each $t$, and
\item the player is only allowed to play strategies from $\calP :=
\set{(x,\dots,x) \in \mathbb{R}^d: x\in [0,1]}$
\end{enumerate}
We then have that the worst-case regret $R(T)$ of any algorithm over $T$ rounds
satisfies $R(T) \geq T/2$.
\end{lemma}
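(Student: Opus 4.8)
The plan is to analyze the expected regret of the randomized adversary $\adver$ constructed above and then derandomize it via an averaging (Yao-type) argument. Since the randomized $\adver$ will be shown to force expected regret at least $T/2$ against any, possibly randomized, algorithm confined to $\calP$, there must exist a deterministic instantiation of $\adver$'s coin flips, i.e.\ a fixed oblivious sequence $(g_t,i_t)_{t=1}^{T}$, for which the player-expected regret is at least $T/2$; this is exactly the worst-case bound claimed.

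First I would exhibit a single fixed strategy in $[0,1]^d$ that collects reward $1$ in every round, regardless of $\adver$'s coin flips. Let $(A_1,A_2)$ be the partition chosen by $\adver$ and define $\vecx^{*}=(z_1,\dots,z_d)$ by setting $z_j=a$ for $j\in A_1$ and $z_j=b$ for $j\in A_2$. By the construction of $\adver$, whenever it selects $h_1$ it draws $i_t\in A_1$, so $r_t(\vecx^{*})=h_1(z_{i_t})=u(a)=1$; whenever it selects $h_2$ it draws $i_t\in A_2$, so $r_t(\vecx^{*})=h_2(z_{i_t})=v(b)=1$. Hence $\sum_{t=1}^{T} r_t(\vecx^{*})=T$ deterministically, and the in-hindsight optimum is therefore also exactly $T$. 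This step is where the restriction of the player to the diagonal $\calP$ (and the existence of a nontrivial $2$-partition, i.e.\ $d\geq 2$) becomes essential: the optimum \emph{hedges} by placing mass near $a$ on the coordinates of $A_1$ and near $b$ on those of $A_2$, which no single diagonal point $(x,\dots,x)$ can imitate.

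Next I would upper bound the player's reward. The key observation is that the coin flip choosing $h_1$ versus $h_2$ at round $t$ is independent of the history $(\vecx_1,r_1(\vecx_1),\dots,\vecx_{t-1},r_{t-1}(\vecx_{t-1}))$ on which $\vecx_t=(x_t,\dots,x_t)$ depends, hence independent of $x_t$ itself. Conditioning on $x_t$ then gives $\expec[g_t(x_t)\mid x_t]=\tfrac12 h_1(x_t)+\tfrac12 h_2(x_t)$. Since $h_1$ and $h_2$ have supports $[0,1/2]$ and $[1/2,1]$ that meet only at $1/2$, where both vanish, at most one of the two terms is nonzero for any given $x_t$, and each is at most $1$; thus this conditional expectation is at most $1/2$. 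Summing over $t$ yields $\expec[\sum_{t=1}^{T} g_t(x_t)]\leq T/2$.

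Combining the two steps, the expected regret of $\adver$ against any algorithm is at least $T-T/2=T/2$, where the expectation is over both $\adver$'s randomness and the player's randomness. A standard averaging argument then fixes $\adver$'s coins to produce a deterministic oblivious adversary with player-expected regret at least $T/2$, giving the worst-case bound. I expect the only real subtlety to lie in the independence bookkeeping of the third step, namely ensuring that the bandit feedback observed at round $t$ cannot leak information about round $t$'s own coin into the choice of $x_t$, together with the routine check that the $C^{\infty}$ bumps $u,v$ place $h_1,h_2$ in $\calC(\alpha,L,\delta,1)$ for suitable constants; the arithmetic itself is immediate.
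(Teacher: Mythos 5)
Your proposal is correct and follows essentially the same route as the paper: the identical randomized adversary, the same observation that the hedging point (with $a$ on $A_1$ and $b$ on $A_2$) earns reward $1$ every round while any diagonal play earns at most $1/2$ in expectation because $h_1,h_2$ have essentially disjoint supports. The only differences are presentational refinements—you make explicit the independence/conditioning bookkeeping and the final averaging step fixing the adversary's coins, both of which the paper leaves implicit.
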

\begin{proof}
We consider the adversary $\adver$ whose construction was described previously.
Any randomized playing strategy is equivalent to an a-priori random choice from
the set of all deterministic strategies. Since \adver \ is oblivious to the
actions of the player, it suffices to consider an upper bound on expected gain
holding true for all deterministic strategies. Furthermore we consider all
expectations to be conditioned on the event that the 2 partition $(A_1,A_2)$ was
chosen at the beginning by $\adver$.

Consider any deterministic algorithm $\algo$. Here $\algo :
((\vecx_1,r_1(\vecx_1)),(\vecx_2,r_2(\vecx_2)), \dots
,(\vecx_{t-1},r_{t-1}(\vecx_{t-1}))) \rightarrow \vecx_t$ is a deterministic
mapping that maps the past $(t-1)$ plays/responses to a fixed strategy $\vecx_t$
at time $t$. Therefore we get

\begin{equation*}
\expec[g_t(x_{i_t})] = \frac{1}{2}\left[\sum_{i_t \in A_1} \frac{1}{\abs{A_1}}
h_1(x_{i_t})\right] + \frac{1}{2}\left[\sum_{i_t \in A_2} \frac{1}{\abs{A_2}}
h_2(x_{i_t})\right]
\end{equation*}

Denoting $a = \text{argmax}_{x \in [0,1]} h_1(x)$ and $b = \text{argmax}_{x \in
[0,1]} h_2(x)$ we clearly have that $\vecx^{*} \in [0,1]^d$ where

\begin{equation*}
(\vecx^{*})_i = \left\{
\begin{array}{rl}
a \quad ; & i \in  A_1 \\
b \quad ; & i \in A_2.
\end{array} \right .
\end{equation*}

uniquely maximizes $\expec[g_t(x_{i_t})]$ for each $t = 1,\dots,T$. In
particular,

\begin{equation} \label{eq:opt_str_adv}
\expec[g_t(x^{*}_{i_t})] = \frac{1}{2}\left[\sum_{i \in A_1} \frac{1}{\abs{A_1}}
h_1(a)\right] + \frac{1}{2}\left[\sum_{i \in A_2} \frac{1}{\abs{A_2}}
h_2(b)\right] = \frac{1}{2}[h_1(a) + h_2(b)] = 1
\end{equation} 

Using \eqref{eq:opt_str_adv} and denoting $\vecx_t = (x_t,\dots,x_t) \in \calP$
as the strategy played at time $t$, we have 

\begin{equation*}
R((A_1,A_2),T) = \expec[\sum_t r_t(\vecx^{*}) - r_t(\vecx_t)] = \expec[\sum_t (1
- g_t(x_t))]
\end{equation*}

where $R((A_1,A_2),T)$ denotes the regret incurred by $\algo$ when $(A_1,A_2)$
was selected as the 2 partition at the beginning by $\adver$. Finally, since

\begin{equation*}
\expec[g_t(x_t)] = \left\{
\begin{array}{rl}
\frac{1}{2} h_1(x_t) \quad ; & x_t \in [0,1/2] \\
\frac{1}{2} h_2(x_t) \quad ; & x_t \in [1/2,1] 
\end{array} \right .
\end{equation*}

we have that $\expec[g_t(x_t)] \leq 1/2$ for any deterministic algorithm at time
$t$. Hence $R((A_1,A_2),T) \geq T/2$. 
\end{proof}
\subsection*{Upper bound on regret for restricted changes of active coordinate.}
We now consider the adversarial setting where the active coordinate are allowed
to change across time but not in an arbitrary manner as before. To start off we
have at each time $1 \leq t \leq n$ that each reward function is of the form
$r_t(\vecx) = g_t(x_{i_t})$ where $i_t \in \set{1,\dots,d}$ denotes the active
coordinate. We consider that both $(g_t)_{t=1}^{n}$ and $(i_t)_{t=1}^{n}$ are
chosen before the start of play by an adversary, oblivious to the actions of the
player. However we assume that the sequence of active coordinates
$(i_t)_{t=1}^{n}$ is not ``hard'' implying that it contains a few number of
consecutive pairs (relative to the number of rounds $n$) with different values.
We now formally present the definition of hardness of a sequence.

\begin{definition} \label{def:hardness_seq}
For any set $\calB$ we define the hardness of the sequence $(b_1,b_2,\dots,b_n)$
by:
\begin{equation}
H[b_1,\dots,b_n] := 1 + \abs{\set{1 \leq l < n : b_l \neq b_{l+1}}}
\end{equation}
where each $b_l \in \calB$.
\end{definition} 

The above definition is borrowed from Section 8 in \cite{Auer03} where the
authors considered the non-stochastic multi-armed bandit problem, and employed
the definition to characterize the hardness of a sequence of actions. For our
purposes, we assume that the adversary chooses a sequence of active coordinates
$(i_1,\dots,i_n)$ such that $H[i_1,\dots,i_n] \leq S$ where $S$ is known to the
player. We now proceed to show how a slight modification of Algorithm
\ref{alg:cont_arm_band_k1} can be used to achieve a bound on regret, by playing
strategies along the line $\calP$ defined earlier. The main idea here is to
observe that for any sequence $(i_1,\dots,i_n)$ of hardness at most $S$, we will
have at most $S+1$ ``constant'' sub-sequences, i.e. sequences with consecutive
coordinates being equal. Consider the optimal strategy $\vecx^{*}$ where
\begin{equation*}
\vecx^{*} \in \text{argmax}_{\vecx \in [0,1]^d} \sum_{t=1}^{n} g_t(x_{i_t}).
\end{equation*}
We then have that the sequence $(x^{*}_{i_1},\dots,x^{*}_{i_n})$ has hardness at
most $S$. Alternatively, this means that the sequence
$(x^{*}_{i_t},\dots,x^{*}_{i_t})_{t=1}^{n}$ which represents the sequence of
optimal points on the line $\calP$, is at most $S$-hard. Therefore if we
restrict ourselves to playing strategies from $\calP$, the problem reduces to a
one dimensional continuum armed bandit problem where the the players actions
over time, are required to be close to the ``optimal sequence'' along $\calP$,
i.e. $(x^{*}_{i_t},\dots,x^{*}_{i_t})_{t=1}^{n}$. In fact by playing strategies
from the discrete set of equi-spaced points $\calP_M$ as earlier (for some
integer $M > 0$), we obtain a multi-armed adversarial bandit problem where the
players regret is measured against a $S$ hard sequence of actions. For such a
bandit problem, the algorithm Exp3.S was proposed in \cite{Auer03}, which can
now be employed in place of the \textbf{MAB} routine in Algorithm
\ref{alg:cont_arm_band_k1}. With this in mind, we present in the following lemma
an upper bound on the regret $R(T)$ incurred by a player playing strategies from
the set $\calP_M$ over $T$ rounds.
\begin{lemma} \label{lemma:adver_k1_changing}
Consider the scenario where the sequence of active coordinates $(i_1,\dots,i_n)$
is at most $S$ hard. Then by employing the Exp3.S algorithm along with the
strategy set $\calP_M$ for $M = \lceil(\frac{T}{S\log
T})^{\frac{1}{2\alpha+1}}\rceil$ we have that $R(T) =
O(T^{\frac{1+\alpha}{1+2\alpha}} (\log T)^{\frac{\alpha}{1+2\alpha}}
S^{\frac{\alpha}{2\alpha+1}})$.
\end{lemma}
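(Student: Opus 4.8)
The plan mirrors the structure of Lemmas \ref{lemma:stoch_k1_reg_bd} and \ref{lemma:adver_k1}, with the key modification that the comparator is now an $S$-hard sequence rather than a single fixed point. As argued in the preceding paragraph, restricting play to the discretized line $\calP_M$ reduces the problem to a multi-armed adversarial bandit instance with $M$ arms, where regret is measured against the best $S$-hard sequence of arms. First I would decompose the regret as $R(T) = R_1(T) + R_2(T)$, where $R_1(T)$ accounts for the discretization error incurred by approximating the continuous optimal sequence $(x^{*}_{i_t},\dots,x^{*}_{i_t})_{t=1}^{T}$ along $\calP$ by the nearest points in $\set{1/M,\dots,1}$, and $R_2(T)$ accounts for the learning error of Exp3.S against the best $S$-hard sequence of discrete arms.

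For the discretization term, I would fix for each $t$ the grid point $x'_t \in \set{1/M,\dots,1}$ closest to $x^{*}_{i_t}$, so that $\abs{x^{*}_{i_t} - x'_t} < 1/M$. Since each $g_t \in \calC(\alpha,L,\delta,1)$ is H\"older continuous, this gives $g_t(x^{*}_{i_t}) - g_t(x'_t) < L M^{-\alpha}$ (assuming $1/M \le \delta$, which holds for $M$ large), hence summing over $t$ yields $R_1(T) < T L M^{-\alpha}$, exactly as in Lemma \ref{lemma:adver_k1}. The important observation is that the resulting sequence $(x'_t)_{t=1}^{T}$ of nearest grid points inherits hardness at most $S$ from $(x^{*}_{i_t})_{t=1}^{T}$, since mapping a value to its nearest grid point can only merge adjacent blocks, never split them; thus $(x'_t)$ is a legitimate $S$-hard comparator sequence over the arm set $\calP_M$.

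For the learning term $R_2(T)$, I would invoke the regret guarantee of the Exp3.S algorithm from \cite{Auer03}: against any sequence of arms of hardness at most $S$ over an $M$-armed bandit, Exp3.S (with its parameters tuned to the horizon $T$ and the number of arms $M$) achieves regret $O(\sqrt{S\, M T \log(MT)})$, which up to the logarithmic factor is $\tilde{O}(\sqrt{SMT})$. Since our comparator $(x'_t)$ is $S$-hard and is dominated by the optimal $S$-hard discrete sequence, we obtain $R_2(T) = O(\sqrt{SMT\log T})$. Combining the two bounds gives
\begin{equation*}
R(T) = O\!\left(\sqrt{SMT\log T} + \frac{T}{M^{\alpha}}\right),
\end{equation*}
and the stated value $M = \lceil (T/(S\log T))^{1/(2\alpha+1)}\rceil$ is precisely the choice that balances the two terms, yielding $R(T) = O(T^{\frac{1+\alpha}{1+2\alpha}} (\log T)^{\frac{\alpha}{1+2\alpha}} S^{\frac{\alpha}{2\alpha+1}})$ after routine algebra.

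The main obstacle is not the optimization over $M$ (which is mechanical) but rather pinning down the correct form of the Exp3.S regret bound and verifying that the dependence on the number of arms and on the hardness parameter $S$ matches what is claimed in \cite{Auer03}, since the final exponent on $S$ hinges on how $S$ enters the bound (additively inside the square root versus as a multiplicative factor). I would need to confirm that Exp3.S can be tuned using the known upper bound $S$ on the hardness, and that its regret against an $S$-hard comparator scales as $\sqrt{S M T \log T}$ rather than, say, $S\sqrt{MT\log T}$ — the former is what produces the exponent $\frac{\alpha}{2\alpha+1}$ on $S$ in the final bound, and this is exactly the subtle point that distinguishes this lemma from the case where $S$ is unknown.
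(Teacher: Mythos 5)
Your proposal is correct and follows essentially the same route as the paper's proof: the identical decomposition into discretization error $R_1(T) < TLM^{-\alpha}$ against a time-varying nearest-grid comparator, the observation that this comparator sequence inherits hardness at most $S$, and the application of the Exp3.S guarantee (Corollary 8.3 of \cite{Auer03}) giving $R_2(T) = O(\sqrt{SMT\log(MT)})$ before balancing with the stated choice of $M$. Your closing concern is resolved exactly as you suspect: since $S$ is known to the player, Exp3.S can be tuned so that $S$ enters inside the square root, which is precisely the form the paper invokes.
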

\begin{proof}
For some $\vecx^{\prime}_t = (x^{\prime}_t,\dots,x^{\prime}_t) \in \calP_M$ we
can re-write $R(T)$ as $R(T) = R_1(T) + R_2(T)$ where

\begin{align*}
R_1(T) = \expec[\sum_{t=1}^{T} r_t(\vecx^{*}) - r_t(\vecx_t^{\prime})] =
\expec[\sum_{t=1}^{T} g_t(x_{i_t}^{*}) - g_t(x^{\prime}_t)], \\
R_2(T) = \expec[\sum_{t=1}^{T} r_t(\vecx^{\prime}_t) - r_t(\vecx_t)] =
\expec[\sum_{t=1}^{T} g_t(x^{\prime}_t) - g_t(x_t)].
\end{align*} 

Here $\vecx_t = (x_t,\dots,x_t) \in \calP_M$ denotes the strategy played at time
$t$. Now for each $x_{i_t}^{*}$, $\exists x^{\prime}_t \in \set{1/M,\dots,1}$
closest to $x_{i_t}^{*}$ so that $\abs{x_{i_t}^{*}-x^{\prime}_t} < 1/M$.
Choosing this $\vecx^{\prime}_t = (x^{\prime}_t,\dots,x^{\prime}_t) \in \calP_M$
we obtain $R_1(T) < \sum_{t=1}^{T} L \abs{x_{i,t}^{*} - x^{\prime}_t}^{\alpha} <
T L M^{-\alpha}$. It remains to bound $R_2(T)$. To this end note that the
sequence $(\vecx^{\prime}_t)_{t=1}^{n}$ is also at most $S$ hard, hence the
problem has reduced to a $\abs{\calP_M}$ armed adversarial bandit problem with a
$S$ hard optimal sequence of plays against which the regret of the player is to
be bounded. This is accomplished by using the Exp3.S algorithm of \cite{Auer03}.
In particular from Corollary 8.3,\cite{Auer03} we have that $R_2(T) = O(\sqrt{S
M T \log(MT)})$. This gives us the following expression for $R(T)$:
\begin{equation*}
R(T) = O(T M^{-\alpha} + \sqrt{S M T \log(MT)}).
\end{equation*}
Finally upon plugging in the choice $M =  \lceil(\frac{T}{S\log
T})^{\frac{1}{2\alpha+1}}\rceil$ in $R(T)$ we obtain the stated bound.
\end{proof}
Since Algorithm \ref{alg:cont_arm_band_k1} divides the unknown time horizon $n$
into intervals of size $T,2T,4T,\dots$ it follows that the overall regret $R(n)$
after $n$ rounds of play is $R(n) = O(n^{\frac{1+\alpha}{1+2\alpha}} (\log
n)^{\frac{\alpha}{1+2\alpha}} S^{\frac{\alpha}{2\alpha+1}})$. 
\begin{remark}
In case the player does not know $S$, then a regret of 
\begin{equation*}
R(n) = O(n^{\frac{1+\alpha}{1+2\alpha}} (\log n)^{\frac{\alpha}{1+2\alpha}}
H[(i_t)_{t=1}^{n}])
\end{equation*}
would be incurred by Algorithm \ref{alg:cont_arm_band_k1} with the \textbf{MAB}
routine being the Exp3.S algorithm and for the choice
\begin{equation*}
M = \left\lceil\left(\frac{T}{\log T}\right)^{\frac{1}{2\alpha+1}}\right\rceil
\end{equation*}
This can be verified easily along the lines of the proof of Lemma
\ref{lemma:adver_k1_changing} by noting that on account of Corollary 8.2 of
\cite{Auer03}, we have $R_2(T) = O(H[(i_t)_{t=1}^{n}] \sqrt{M T \log(MT)})$.
\end{remark}

%
\section{Analysis for general case, with $k$ active coordinates}
\label{sec:gen_k_analysis}
We now consider the general situation where the reward function $r_t$ consists
of $k$ active coordinates with $1 \leq k \leq d$. Note that for our analysis, we
assume that $k$ is known to the player. However knowing a bound for $k$ also
suffices. The discretized strategy set that we employ for our purposes here has
a more involved structure since $k$ can now be greater than one.  
\subsection*{Partition Assumption.} The very core of our analysis involves the usage of a specific family of partitions $\calA$
of $\set{1,\dots,d}$ where each $\matA \in \calA$ consists of $k$ disjoint subsets $(A_1,\dots,A_k)$.
In particular we require $\calA$ to satisfy an important property namely the partition assumption below.
\begin{definition}
A family of partitions $\calA$ of $\set{1,\dots,d}$ into $k$ disjoint subsets is said to satisfy the 
partition assumption if for any $k$ distict integers  $i_1,i_2,\dots,i_k \in
\set{1,\dots,d}$ there exists a partition $\matA = (A_1,\dots,A_k)$ in $\calA$ such that each set in
$\matA$ contains exactly one of $i_1,i_2,\dots,i_k$.
\end{definition}
The above definition is known as \textit{perfect hashing} in theoretical computer science and is widely used such
as in finding juntas \cite{Devore2011,Mossel03}. Two natural questions that now arise are:
\begin{itemize}
\item How large should $\calA$ be to guarantee the Partition Assumption?
\item How can one efficiently construct such a family of partitions $\calA$?
\end{itemize}
\subsection*{Constructing family of partitions $\calA$}
There is a well known probabilistic construction for the family of partitions $\calA$
satisfying the partition assumption \cite{Devore2011}. In \cite{Devore2011} the
authors made use of such a family of partitions for the problem of \textit{approximating} Lipschitz 
functions intrinsically depending on a subset of the coordinate variables. We describe the probabilistic 
construction outlined in Section $5$ of \cite{Devore2011}, below.

We proceed by drawing balls labeled $1,\dots,k$ 
uniformly at random, with replacement $d$ times. If a ball with label $j$ is drawn at 
the $r^{th}$ time, then we store the integer $r$ in $A_j$. Hence after $d$ rounds 
we would have put each integer from $\set{1,\dots,d}$ into one of the sets $A_1,\dots,A_k$.
Note that some sets might be empty. On performing $m$ independent trials of this experiment, we
obtain a family $\calA$ of $m$ partitions generated independently at random. It
remains to be shown how big $m$ should be such that $\calA$ satisfies the
partition assumption.

To see this, consider a fixed $k$ tuple $S \in {[d]
\choose k}$. The probability that a random partition $\matA$ separates the
elements of $S$ into distinct sets $A_1,\dots,A_k$ is $p =
\frac{k!}{k^k}$. Hence the probability that none of the $m$ partitions separate
$S$ is $(1-p)^m$.  Since we have $d \choose k$ tuples we thus have from the
union bound that each $k$ tuple $S \in {[d] \choose k}$ is separated by a
partition with probability at least $1 - {d \choose k} (1-p)^m$. Therefore to
have a non zero probability of success, $m$ should be large enough to ensure
that ${d \choose k} (1-p)^m < 1$. This is derived by making use of the following
series of inequalities:
\begin{align*}
{d \choose k} \left(1-\frac{k!}{k^k}\right)^m \leq {d \choose k}
\left(1-\frac{\sqrt{2\pi k}}{e^k}\right)^m \leq d^{k}(1 - e^{-k})^m \leq
d^{k}e^{-me^{-k}}
\end{align*}
where we use Stirling's approximation in the first inequality and used the fact
$(1 - x^{-1})^x \leq e^{-1}$ when $x > 1$ in the third inequality. It follows
that if $m \geq 2k e^{k} \log d$ we then have that $\calA$ meets the partition
assumption with probability at least $1 - d^{-k}$. 

Note that the size of the family obtained is \textit{nearly optimal} - it is known 
that the size of any such family is $\Omega(e^k \log d / \sqrt{k})$ \cite{Fredman84,Korner88,Nilli94}.
On the other hand there also exist efficient (i.e. take time $poly(d,k)$) \textit{deterministic} constructions 
for such families of partitions with the size of the family being $O(k^{O(\log k)} e^k \log d)$ \cite{Naor95}. 
For the convenience of the reader we outline the details of the scheme from \cite{Naor95} in Appendix \ref{sec:det_constr_hash_functions}.
We consider for our purposes in this paper the probabilistic construction of the family $\calA$ due to the smaller
size of the resulting family.    
\subsection*{Constructing strategy set $\calP_M$ using $\calA$.} Say we are
given a family of partitions $\calA$ satisfying the partition assumption. Then
using $\calA$ we construct the discrete set of strategies $\calP_M \in [0,1]^d$
for some fixed integer $M > 0$ as follows.
\begin{equation} \label{eq:strat_set_gen_k}
\calP_M := \set{\frac{1}{M} \sum_{j=1}^{k} \alpha_j \chi_{\matA_j}; \alpha_j \in
\set{1,\dots,M}, (\matA_1,\dots,\matA_k) \in \calA} \subset [0,1]^d
\end{equation}
The above set of points was also employed in \cite{Devore2011} for the function approximation problem.
Note that a strategy $P = \frac{1}{M} \sum_{j=1}^{k} \alpha_j \chi_{\matA_j}$
has coordinate value $\frac{1}{M}\alpha_j$ at each of the coordinate indices in
$A_j$. Therefore we see that for each partition $\matA \in \calA$ we have $M^k$
strategies implying a total of $M^k \abs{\calA}$ strategies in $\calP_M$. 
\subsection*{Projection property.} An important property of the strategy set
$\calP_M$ is the following. Given any $k$ tuple of distinct indices
$(i_1,\dots,i_k)$ with $i_j \in \set{1,\dots,d}$ and any integers $1 \leq
n_1, \dots, n_k \leq M$, there is a strategy $\vecx \in \calP_M$ such
that
\begin{equation*}
 (x_{i_1},\dots,x_{i_k}) = \left(\frac{n_1}{M},\dots,\frac{n_k}{M}\right).
\end{equation*}
 To see this, one can simply take a partition $\matA = (A_1,\dots,A_k)$ from
$\calA$ such that each $i_{j}$ is in a different set $A_j$ for $j=1,\dots,k$.
Then setting appropriate $\alpha_j = n_{j}$ when $i_{j} \in A_j$ we get that
coordinate $i_{j}$ of $\vecx$ has the value $n_{j}/M$.
%
\subsection{Analysis when $k$ active coordinates are fixed across time}
\label{subsec:gen_k_analysis_fixed}
We now describe our Algorithm $\CABdk$ and provide bounds on its regret when
$(i_1,\dots,i_k)$ is fixed across time. Note that the outer loop is a standard
doubling trick which is used as the player has no knowledge of the time horizon
$n$. Observe that before the start of the inner loop of duration $T$, the player
constructs the finite strategy set $\calP_M$, where $M$ increases progressively
with $T$. Within the inner loop, the problem reduces to a finite armed bandit
problem. The \textbf{MAB} routine can be any standard multi-armed bandit
algorithm such as UCB-1 (stochastic model) or Exp3 (adversarial model). The main
idea is that for increasing values of $M$, we would have for any $\vecx^{*}$ and
any $(i_1,\dots,i_k)$ the existence of an arbitrarily close point to
$(x^{*}_{i_1},\dots,x^{*}_{i_k})$ in $\calP_M$. This follows from the projection
property of $\calP_M$. Coupled with the H\"{o}lder continuity of the reward
functions this then ensures that the \textbf{MAB} routine progressively plays
strategies closer and closer to $\vecx^{*}$ leading to a bound on regret. The
algorithm is motivated by the CAB1 algorithm \cite{Kleinberg04},
however unlike the equi-spaced sampling done in CAB1 we consider a
probabilistic construction of the discrete set of sampling points based on
partitions of $\set{1,\dots,d}$.
%
\begin{algorithm} 
\caption{Algorithm CAB($d,k$)} \label{alg:cont_arm_band_gen_k}
\begin{algorithmic}
\State $T = 1$

\State Construct family of partitions $\calA$ 

\While{$T \leq n$} 

\State $M = \left\lceil\left(k^{\frac{\alpha-3}{2}} e^{-\frac{k}{2}} (\log
d)^{-\frac{1}{2}} \sqrt{\frac{T}{\log T}}\right)^{2/(2\alpha + k)}\right\rceil$ 

\begin{itemize}
\item Create $\calP_M$ using $\calA$

\item Initialize \textbf{MAB} with $\calP_M$
\end{itemize}

\For{$t=T,\dots,\min(2T-1,n)$} 

\begin{itemize}[leftmargin=1.5cm]
\item get $\vecx_t$ from \textbf{MAB}

\item Play $\vecx_t$ and get $r_t(\vecx_t)$

\item Feed $r_t(\vecx_t)$ back to \textbf{MAB}
\end{itemize}
 
\EndFor

\State $T = 2T$

\EndWhile

\end{algorithmic}

\end{algorithm}

\noindent We now present in the following lemma the regret bound incurred within
an inner loop of duration $T$. 
\begin{lemma} \label{lemma:stoch_adv_k_fix_up_bd}
Given that $(i_1,\dots,i_k)$ is fixed across time then if the strategy set
$\calP_M$ is used with:
\begin{enumerate}
\item the UCB-1 algorithm for the stochastic setting or,
\item the Exp3 algorithm for the adversarial setting, 
\end{enumerate}
we have for the choice $M = \left\lceil\left(k^{\frac{\alpha-3}{2}}
e^{-\frac{k}{2}} (\log d)^{-\frac{1}{2}} \sqrt{\frac{T}{\log
T}}\right)^{\frac{2}{2\alpha+k}}\right\rceil$ that the regret incurred by the
player after $T$ rounds is given by
\begin{equation*}
R(T) = O\left(T^{\frac{\alpha+k}{2\alpha+k}} (\log T)^{\frac{\alpha}{2\alpha+k}}
k^{\frac{\alpha(k+6)}{2(2\alpha+k)}} (\log d)^{\frac{\alpha}{2\alpha+k}} \right).
\end{equation*}
\end{lemma}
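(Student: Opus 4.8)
The plan is to follow the two one-dimensional arguments (Lemmas \ref{lemma:stoch_k1_reg_bd} and \ref{lemma:adver_k1}) almost verbatim, splitting the regret into a discretization error and a finite-armed bandit error and then choosing $M$ to balance them. Fixing the optimal strategy $\vecx^{*}$, I would select a reference point $\vecx^{\prime} \in \calP_M$ and write $R(T) = R_1(T) + R_2(T)$ with $R_1(T) = \expec[\sum_{t} r_t(\vecx^{*}) - r_t(\vecx^{\prime})]$ and $R_2(T) = \expec[\sum_{t} r_t(\vecx^{\prime}) - r_t(\vecx_t)]$. The first term measures how well $\calP_M$ approximates $\vecx^{*}$ on the active coordinates; the second is the regret of the \textbf{MAB} subroutine against the best arm of $\calP_M$.

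For $R_1(T)$ I would use the projection property of $\calP_M$: for the (unknown) fixed active tuple $(i_1,\dots,i_k)$ there is a $\vecx^{\prime} \in \calP_M$ whose active coordinates each lie within $1/M$ of $x^{*}_{i_1},\dots,x^{*}_{i_k}$, so the two points differ by at most $\sqrt{k}/M$ in the $\ell_2$ norm restricted to the active subspace. Once $T$ (hence $M$) is large enough that $\sqrt{k}/M \le \delta$, the local H\"older condition of Definition \ref{def:holder_smooth_def} gives $r_t(\vecx^{*}) - r_t(\vecx^{\prime}) \le L (\sqrt{k}/M)^{\alpha}$ (in expectation via $\bar g$ in the stochastic model, and directly for each $g_t$ in the adversarial model), whence $R_1(T) \le T L k^{\alpha/2} M^{-\alpha}$. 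For $R_2(T)$ the inner loop is exactly a finite $N$-armed bandit with $N = \abs{\calP_M} = M^{k}\abs{\calA}$ arms, so I would invoke the UCB-1 bound (stochastic) or the Exp3 bound (adversarial) as in the $k=1$ lemmas to get $R_2(T) = O(\sqrt{N T \log N})$; since the chosen $M$ is polynomial in $T$ we have $\log N = O(\log T)$, so $R_2(T) = O(\sqrt{N T \log T})$, and substituting $\abs{\calA} = O(k e^{k} \log d)$ gives $R_2(T) = O(\sqrt{M^{k}\, k e^{k}\, T \log T\, \log d})$.

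Adding the two bounds gives $R(T) = O(T k^{\alpha/2} M^{-\alpha} + \sqrt{M^{k}\, k e^{k}\, T \log T\, \log d})$, and the stated $M$ is the value that balances these contributions. The remaining work is to substitute $M = \left\lceil\left(k^{\frac{\alpha-3}{2}} e^{-\frac{k}{2}} (\log d)^{-\frac{1}{2}} \sqrt{\frac{T}{\log T}}\right)^{\frac{2}{2\alpha+k}}\right\rceil$ and simplify; I expect the bookkeeping of the $k$- and $e^{k}$-exponents to be the only real obstacle. The decisive point is that $M^{k}$ carries a factor $e^{-k^{2}/(2\alpha+k)}$ which, against the $e^{k}$ coming from $\abs{\calA}$, collapses inside $R_2(T)^2$ to $e^{2\alpha k/(2\alpha+k)} = O(1)$; this is exactly why the exponential-in-$k$ size of the partition family does not survive in the regret and the $d$-dependence is only $(\log d)^{\frac{\alpha}{2\alpha+k}}$. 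A direct substitution into $R_1(T)$ already produces $T^{\frac{\alpha+k}{2\alpha+k}} (\log T)^{\frac{\alpha}{2\alpha+k}} k^{\frac{\alpha(k+6)}{2(2\alpha+k)}} (\log d)^{\frac{\alpha}{2\alpha+k}}$ (the powers of $T$, $\log T$, $k$ and $\log d$ combining exactly as claimed), and the same computation shows $R_2(T)$ is of the same order (or smaller in its $k$-power), so that $R(T) = R_1(T) + R_2(T)$ is $O$ of the stated bound.
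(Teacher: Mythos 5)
Your proposal is correct and follows essentially the same route as the paper's proof: the same split $R(T)=R_1(T)+R_2(T)$, the projection property plus H\"older continuity giving $R_1(T)=O(TLk^{\alpha/2}M^{-\alpha})$, the UCB-1/Exp3 bounds with $\abs{\calP_M}=O(M^k k e^k \log d)$ for $R_2(T)$, and the same substitution of $M$; your exponent bookkeeping (the $e^{k}$ collapsing to $e^{2\alpha k/(2\alpha+k)}=O(1)$, and $R_1$ carrying the dominant $k^{\frac{\alpha(k+6)}{2(2\alpha+k)}}$ factor) checks out. Your explicit handling of the locality restriction $\delta$ (requiring $\sqrt{k}/M\leq\delta$) is a small point of care that the paper's proof leaves implicit.
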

\begin{proof}
For some $\vecx^{\prime} \in \calP_M$ we can split $R(T)$ into $R_1(T) + R_2(T)$
where:
\begin{align}
R_1(T) = \sum_{t=1}^{T} \expec[g_t(x^{*}_{i_{1}},\dots,x^{*}_{i_{k}}) -
g_t(x^{\prime}_{i_{1}},\dots,x^{\prime}_{i_{k}})], \label{eq:r1_regret_fix_tup}
\\
R_2(T) = \sum_{t=1}^{T} \expec[g_t(x^{\prime}_{i_{1}},\dots,x^{\prime}_{i_{k}})
- g_t(x^{(t)}_{i_{1}},\dots,x^{(t)}_{i_{k}})]. \label{eq:r2_regret_fix_tup}
\end{align}
For the $k$ tuple $(i_{1},\dots,i_{k}) \in \calT^d_k$, there exists
$\vecx^{\prime} \in \calP_M$ with $x^{\prime}_{i_1} = \frac{\alpha_1}{M},\dots,
x^{\prime}_{i_k} = \frac{\alpha_k}{M}$ where $\alpha_1,\dots,\alpha_k$ are such
that $\abs{\alpha_j/M - x^{*}_{i_{j}}} < (1/M)$. This follows from the
projection property of $\calA$. On account of the H\"{o}lder continuity of
reward functions we then have that
\begin{align*}
\expec[g_t(x^{*}_{i_{1}},\dots,x^{*}_{i_{k}}) -
g_t(x^{\prime}_{i_{1}},\dots,x^{\prime}_{i_{k}})] < L
\left(\left(\frac{1}{M}\right)^2 k \right)^{\alpha/2}.
\end{align*}
In other words, $R_1(T) = O(T k^{\alpha/2} M^{-\alpha})$. In order to bound
$R_2(T)$, we note that the problem has reduced to a $\abs{\calP_M}$-armed bandit
problem. Specifically we note from \eqref{eq:r2_regret_fix_tup} that we are
comparing against a sub-optimal strategy $\vecx^{\prime}$ instead of the optimal
one in $\calP_M$. Hence $R_2(T)$ can be bounded by using existing bounds for
finite-armed bandit problems.  Now for the stochastic setting we can employ the
UCB-1 algorithm \cite{Auer02} and play at each $t$ a strategy $\vecx_t \in
\calP_M$. In particular, on account of Assumption \ref{assump:dist_reward_fns},
it can be shown that $R_2(T) = O(\sqrt{\abs{\calP_M} T \log T})$ (Theorem $3.1$,
\cite{Kleinberg04}). For the adversarial setting we can employ the Exp3
algorithm \cite{Auer95} so that $R_2(T) = O(\sqrt{\abs{\calP_M} T \log
\abs{\calP_M}})$. Combining the bounds for $R_1(T)$ and $R_2(T)$ and recalling
that $\abs{\calP_M} = O(M^k k e^k \log d)$ we obtain:
\begin{align} 
R(T) &=  O(T M^{-\alpha} k^{\alpha/2} + \sqrt{M^k k e^k \log d \ T \log T})
\quad \text{(stochastic)}, \label{eq:gen_stoch_regret_bound_fix} \\
\text{and} \ R(T) &= O(T M^{-\alpha} k^{\alpha/2} + \sqrt{M^k k e^k \log d \ T
\log(M^k k e^k \log d)}). \quad \text{(adversarial)}
\label{eq:gen_adver_regret_bound_fix}
\end{align}
Plugging $M = \left\lceil\left(k^{\frac{\alpha-3}{2}} e^{-\frac{k}{2}} (\log
d)^{-\frac{1}{2}} \sqrt{\frac{T}{\log
T}}\right)^{\frac{2}{2\alpha+k}}\right\rceil$ in
\eqref{eq:gen_stoch_regret_bound_fix} and \eqref{eq:gen_adver_regret_bound_fix}
we obtain the stated bound on $R(T)$ for the respective models. 
\end{proof}
Lastly equipped with the above bound we have that the regret incurred by
Algorithm 1 over $n$ plays is given by
\begin{equation*}
\sum_{i=0,T=2^i}^{i = \log n} R(T) = O\left(n^{\frac{\alpha+k}{2\alpha+k}} (\log
n)^{\frac{\alpha}{2\alpha+k}} k^{\frac{\alpha(k+6)}{2(2\alpha+k)}}
(\log d)^{\frac{\alpha}{2\alpha+k}}\right).
\end{equation*}
\subsection{Analysis when $k$ active coordinates change across time.}
\label{subsec:gen_k_analysis_change}
We now  consider a more general \emph{adversarial} setting where the the active
$k$ tuple is allowed to change over time. Formally this means that the reward
functions $(r_t)_{t=1}^{n}$ now have the form $r_t(x_1,\dots,x_d) =
g_t(x_{i_{1,t}},\dots,x_{i_{k,t}})$ where $(i_{1,t},\dots,i_{k,t})_{t=1}^{n}$
denotes the sequence of $k$-tuples chosen by the adversary before the start of
plays. As before, $r_t:[0,1]^d \rightarrow [0,1]$ with $g_t:[0,1]^k \rightarrow
[0,1]$ where $g_t \in \calC(\alpha,\delta,L,k)$. We consider the sequence of
$k$-tuples to be at most $S$-hard implying that
$H[(i_{1,t},\dots,i_{k,t})_{t=1}^{n}] \leq S$ for some $S > 0$, and also assume
that $S$ is known to the player. We now proceed to show how a slight
modification of Algorithm CAB($d,k$) can be used to derive a bound on the regret
in this setting. Consider the optimal strategy $\vecx^{*}$ where
\begin{equation*}
\vecx^{*} \in \text{argmax}_{\vecx \in [0,1]^d} \sum_{t=1}^{n}
g_t(x_{i_{1,t}},\dots,x_{i_{k,t}}).
\end{equation*}
Since the sequence of $k$-tuples is $S$-hard, this in turn implies for any
$\vecx^{*}$ that $H[(x^{*}_{i_{1,t}},\dots,x^{*}_{i_{k,t}})_{t=1}^{n}] \leq S$.
Therefore we can now consider this as a setting where the players regret is
measured against a $S$-hard sequence
$(x^{*}_{i_{1,t}},\dots,x^{*}_{i_{k,t}})_{t=1}^{n}$.
 
Now the player does not know which $k$-tuple is chosen at each time $t$. Hence
we again construct the discrete strategy set $\calP_M$ (as defined in
\eqref{eq:strat_set_gen_k}) using the family of partitions $\calA$ of
$\set{1,\dots,d}$. By construction, we will have for any $\vecx \in [0,1]^d$ and
any $k$-tuple  $(i_1,\dots,i_k)$, the existence of a point $\vecz$ in $\calP_M$
such that $(z_{i_1},\dots,z_{i_k})$ approximates $(x_{i_1},\dots,x_{i_k})$
arbitrarily well for increasing values of $M$. Hence, for the optimal sequence
$(x^{*}_{i_{1,t}},\dots,x^{*}_{i_{k,t}})_{t=1}^{n}$, we have the existence of a
sequence of points $(\vecz^{(t)})_{t=1}^{n}$ where $\vecz^{(t)} \in \calP_M$
with the following two properties.
\begin{enumerate}
\item \textit{S-hardness.}
$H[(z^{(t)}_{i_{1,t}},\dots,z^{(t)}_{i_{k,t}})_{t=1}^{n}] \leq S$. This follows
easily from the $S$-hardness of the sequence
$(x^{*}_{i_{1,t}},\dots,x^{*}_{i_{k,t}})_{t=1}^{n}$ and by choosing for each
$(x^{*}_{i_{1,t}},\dots,x^{*}_{i_{k,t}})$ a corresponding $\vecz^{(t)} \in
\calP_M$ such that $\norm{(x^{*}_{i_{1,t}},\dots,x^{*}_{i_{k,t}}) -
(z^{(t)}_{i_{1,t}},\dots,z^{(t)}_{i_{k,t}})}$ is minimized.

\item \textit{Approximation property.}
$\norm{(x^{*}_{i_{1,t}},\dots,x^{*}_{i_{k,t}}) -
(z^{(t)}_{i_{1,t}},\dots,z^{(t)}_{i_{k,t}})}_2 = O(k^{\alpha / 2} M^{-\alpha})$.
This is easily verifiable via the projection property of the set $\calP_M$.
\end{enumerate}
Therefore by employing the Exp3.S algorithm \cite{Auer03} on the strategy set
$\calP_M$ we reduce the problem to a finite armed adversarial bandit problem
where the players regret measured against the $S$-hard sequence
$(z^{(t)}_{i_{1,t}},\dots,z^{(t)}_{i_{k,t}})_{t=1}^{n}$ is bounded from above.
The approximation property of this sequence (as explained above) coupled with
the H\"{o}lder continuity of $g_t$ ensures in turn that the players regret
against the original sequence
$(x^{*}_{i_{1,t}},\dots,x^{*}_{i_{k,t}})_{t=1}^{n}$ is also bounded. With this
in mind we present the following lemma, which formally states a bound on regret
after $T$ rounds of play.   
\begin{lemma} \label{lemma:adver_gen_k_up_bd_change}
Given the above setting and assuming that:
\begin{enumerate}
\item the sequence of $k$-tuples $(i_{1,t},\dots,i_{k,t})_{t=1}^{n}$ is at most
$S$-hard and,

\item the Exp3.S algorithm is used along with the strategy set $\calP_M$,
\end{enumerate}
we have for 
\begin{equation*}
M = \left\lceil\left(k^{\frac{\alpha-3}{2}} e^{-\frac{k}{2}} (S\log
d)^{-\frac{1}{2}} \sqrt{\frac{T}{\log
T}}\right)^{\frac{2}{2\alpha+k}}\right\rceil,
\end{equation*}
that the regret incurred by the player after $T$ rounds is given by:
\begin{equation*}
R(T) = O\left(T^{\frac{\alpha+k}{2\alpha+k}} (\log T)^{\frac{\alpha}{2\alpha+k}}
k^{\frac{\alpha(k+6)}{2(2\alpha+k)}} (S\log d)^{\frac{\alpha}{2\alpha+k}} \right).
\end{equation*}
\end{lemma}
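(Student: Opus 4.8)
The plan is to mirror the proof of Lemma \ref{lemma:stoch_adv_k_fix_up_bd}, but to replace the single fixed comparison point $\vecx^{\prime}$ by the $S$-hard comparison sequence $(\vecz^{(t)})_{t=1}^{T}$ constructed above, and to replace the Exp3 guarantee by the Exp3.S guarantee already used in Lemma \ref{lemma:adver_k1_changing}. First I would split the regret as $R(T) = R_1(T) + R_2(T)$, where
\begin{align*}
R_1(T) &= \sum_{t=1}^{T} \expec[g_t(x^{*}_{i_{1,t}},\dots,x^{*}_{i_{k,t}}) - g_t(z^{(t)}_{i_{1,t}},\dots,z^{(t)}_{i_{k,t}})], \\
R_2(T) &= \sum_{t=1}^{T} \expec[g_t(z^{(t)}_{i_{1,t}},\dots,z^{(t)}_{i_{k,t}}) - g_t(x^{(t)}_{i_{1,t}},\dots,x^{(t)}_{i_{k,t}})],
\end{align*}
with $(\vecz^{(t)})_{t=1}^{T}$ the sequence of points in $\calP_M$ guaranteed by the projection property, so that it is $S$-hard and approximates the optimal coordinate sequence as stated above.

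To bound $R_1(T)$, I would invoke the approximation property together with the H\"{o}lder continuity of each $g_t$: each summand is at most $L \norm{(x^{*}_{i_{1,t}},\dots,x^{*}_{i_{k,t}}) - (z^{(t)}_{i_{1,t}},\dots,z^{(t)}_{i_{k,t}})}^{\alpha} = O(k^{\alpha/2} M^{-\alpha})$, exactly as in the fixed-tuple analysis, giving $R_1(T) = O(T k^{\alpha/2} M^{-\alpha})$. To bound $R_2(T)$, I would observe that, since the comparison sequence $(\vecz^{(t)})_{t=1}^{T}$ lies in $\calP_M$ and is $S$-hard, $R_2(T)$ is exactly the regret of an $\abs{\calP_M}$-armed adversarial bandit measured against an $S$-hard sequence of arms. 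This is precisely the situation handled by Exp3.S, so Corollary 8.3 of \cite{Auer03} yields $R_2(T) = O(\sqrt{S \abs{\calP_M} T \log(\abs{\calP_M} T)})$. Recalling that $\abs{\calP_M} = O(M^k k e^k \log d)$, I would combine the two bounds into
\begin{equation*}
R(T) = O\left(T M^{-\alpha} k^{\alpha/2} + \sqrt{S M^k k e^k \log d \; T \log(M^k k e^k \log d)}\right).
\end{equation*}

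Finally I would substitute the stated $M$, whose only change from Lemma \ref{lemma:stoch_adv_k_fix_up_bd} is the extra $S^{-1/2}$ factor inside the bracket; raised to the power $2/(2\alpha+k)$ this contributes $S^{-1/(2\alpha+k)}$, which rebalances the two terms so that the hardness $S$ surfaces as the overall multiplicative factor $S^{\frac{\alpha}{2\alpha+k}}$, producing the claimed bound. The main obstacle I anticipate is not the arithmetic but the careful justification of the $R_2$ step: one must verify that the $S$-hardness of the optimal coordinate sequence is genuinely inherited by a sequence of \emph{actual} $\calP_M$ points (property 1 above), so that Exp3.S is being compared against an admissible $S$-hard arm sequence, and one must confirm that the logarithmic factor in the Exp3.S guarantee is of the $\log(\abs{\calP_M}T)$ form rather than $\log\abs{\calP_M}$, since this determines how the $\log T$ powers combine after the substitution for $M$.
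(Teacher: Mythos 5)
Your proposal is correct and follows essentially the same route as the paper's own proof: the same split into $R_1(T)+R_2(T)$ against the $S$-hard sequence $(\vecz^{(t)})_{t=1}^{T}$ from the projection property, the same H\"{o}lder-continuity bound $R_1(T) = O(Tk^{\alpha/2}M^{-\alpha})$, the same appeal to Corollary 8.3 of \cite{Auer03} giving $R_2(T) = O(\sqrt{S\abs{\calP_M}T\log(\abs{\calP_M}T)})$, and the same substitution of $M$. The two points you flag as needing care (inheritance of $S$-hardness by actual $\calP_M$ points, and the $\log(\abs{\calP_M}T)$ form of the Exp3.S factor) are exactly what the paper verifies, so only the minor omission of $T$ inside the logarithm in your combined display needs fixing to match your own stated $R_2$ bound.
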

\begin{proof}
At each time $t$, for some $\vecz^{(t)} \in \calP_M$ we can split $R(T)$ into
$R_1(T)+R_2(T)$ where
\begin{align*}
R_1(T) = \expec[\sum_{t=1}^{T} g_t(x^{*}_{i_{1,t}},\dots,x^{*}_{i_{k,t}}) -
g_t(z^{(t)}_{i_{1,t}},\dots,z^{(t)}_{i_{k,t}})], \\
R_2(T) = \expec[\sum_{t=1}^{T} g_t(z^{(t)}_{i_{1,t}},\dots,z^{(t)}_{i_{k,t}}) -
g_t(x^{(t)}_{i_{1,t}},\dots,x^{(t)}_{i_{k,t}})].
\end{align*}
Let us consider $R_1(T)$ first. As before, from the projection property of
$\calA$ we have for each $(x^{*}_{i_{1,t}},\dots,x^{*}_{i_{k,t}})$, that there
exists $\vecz^{(t)} \in \calP_M$ with 
\begin{equation*}
z^{(t)}_{i_{1,t}} = \frac{\alpha_1^{(t)}}{M},\dots,z^{(t)}_{i_{k,t}} =
\frac{\alpha_k^{(t)}}{M}
\end{equation*}
where $\alpha_1^{(t)},\dots,\alpha_k^{(t)}$ are such that $\abs{\alpha_j^{(t)}/M
- x^{*}_{i_{j,t}}} < (1/M)$ holds for $j=1,\dots,k$ and each $t=1,\dots,n$.
Therefore from H\"{o}lder continuity of $g_t$ we obtain $R_1(T) = O(T
k^{\alpha/2} M^{-\alpha})$. It remains to bound $R_2(T)$. To this end, note that
the sequence $(z^{(t)}_{i_{1,t}},\dots,z^{(t)}_{i_{k,t}})_{t=1}^{n}$ with
$\vecz^{(t)} \in \calP_M$ is at most $S$-hard. Hence the problem has reduced to
a $\abs{\calP_M}$ armed adversarial bandit problem with a $S$-hard optimal
sequence of plays against which the regret of the player is to be bounded. This
is accomplished by using the Exp3.S algorithm of \cite{Auer03} which is designed
to control regret against \textit{any} $S$-hard sequence of plays. In particular
from Corollary 8.3 of \cite{Auer03} we have that $R_2(T) = O(\sqrt{S
\abs{\calP_M}T \log(\abs{\calP_M} T)})$. Combining the bounds for $R_1(T)$ and
$R_2(T)$ and recalling that $\abs{\calP_M} = O(M^k k e^k \log d)$ we obtain the
following expression for $R(T)$:
\begin{equation}
R(T) = O(Tk^{\alpha / 2} M^{-\alpha} + \sqrt{S T M^{k} k e^{k} \log d \log(T M^k
k e^k \log d)}). \label{eq:adver_regret_changing_k_tup}
\end{equation}
Lastly after plugging in the value $M = \left\lceil\left(k^{\frac{\alpha-3}{2}}
e^{-\frac{k}{2}} (S\log d)^{-\frac{1}{2}} \sqrt{\frac{T}{\log
T}}\right)^{\frac{2}{2\alpha+k}}\right\rceil$ in
\eqref{eq:adver_regret_changing_k_tup}, it is verifiable that we obtain the
stated bound on $R(T)$.
\end{proof}
Then using strategy set $\calP_M$ with $M$ defined as above, we can employ
Algorithm \ref{alg:cont_arm_band_gen_k} with \textbf{MAB} sub-routine being the
Exp3.S algorithm. Since the regret in the inner loop over $T$ rounds is given by
$R(T)$, we have that the overall regret over $n$ plays is given by
\begin{equation*}
R(n) = \sum_{i=0,T=2^i}^{i=\log n} R(T) = O\left(n^{\frac{\alpha+k}{2\alpha+k}}
(\log n)^{\frac{\alpha}{2\alpha+k}} k^{\frac{\alpha(k+6)}{2(2\alpha+k)}}
(S\log d)^{\frac{\alpha}{2\alpha+k}}\right). 
\end{equation*}
\begin{remark}
In case the player does not know $S$, then a regret of 
\begin{equation*}
R(n) = O\left(n^{\frac{\alpha+k}{2\alpha+k}} (\log n)^{\frac{\alpha}{2\alpha+k}}
k^{\frac{\alpha(k+6)}{2(2\alpha+k)}} (\log d)^{\frac{\alpha}{2\alpha+k}} H[(\veci_t)_{t=1}^{n}]\right)
\end{equation*}
would be incurred by Algorithm \ref{alg:cont_arm_band_gen_k} with the
\textbf{MAB} routine being the Exp3.S algorithm and for the choice
\begin{equation*}
M = \left\lceil\left(k^{\frac{\alpha-3}{2}} e^{-\frac{k}{2}} (\log
d)^{-\frac{1}{2}} \sqrt{\frac{T}{\log
T}}\right)^{\frac{2}{2\alpha+k}}\right\rceil.
\end{equation*}
Here $\veci_t$ is shorthand notation for $(i_{1,t},\dots,i_{k,t})$. This can be
verified easily along the lines of the proof of Lemma
\ref{lemma:adver_gen_k_up_bd_change} by noting that on account of Corollary 8.2
of \cite{Auer03}, we have $R_2(T) = O(H[(\veci_t)_{t=1}^{n}]\sqrt{\abs{\calP_M}T
\log(\abs{\calP_M} T)})$.
\end{remark}
\section{Concluding Remarks} \label{sec:conclusion}
In this work we considered continuum armed bandit problems for the stochastic
and adversarial settings where the reward function $r:[0,1]^d \rightarrow \matR$
is assumed to depend on $k$ out of the $d$ coordinate variables implying
$r(x_1,\dots,x_d) = g(x_{i_1},\dots,x_{i_k})$ at each round. Assuming $g$ to
satisfy a local H\"{o}lder continuity condition and the $k$ tuple
$(i_1,\dots,i_k)$ to be fixed but unknown across time, we proposed a simple
modification of the CAB1 algorithm namely the $\CABdk$ algorithm which
is based on a probabilistic construction of the set of sampling points. We
proved that our algorithm achieves a regret bound that scales
sub-logarithmically with dimension $d$, with the \textit{rate} of growth of
regret depending only on $k$ and the exponent $\alpha \in (0,1]$. We then showed
how $\CABdk$ can be adapted to the setting where the $k$-tuple changes across
time, and derived a bound on its regret in terms of the hardness of the sequence
of $k$-tuples.

\textbf{Improved regret bounds.} The regret bounds derived in this paper for the case when $(i_1,\dots,i_k)$ is fixed, can be
sharpened by employing optimal finite armed bandit algorithms. In particular, 
for the adversarial setting we can use the INF algorithm of \cite{Audibert10} as the \textbf{MAB}
routine in our algorithm and get rid of the $\log n$ factor from the regret bound. For the stochastic setting,  
if the range of the reward functions was restricted to be $[0,1]$ then one can again simply employ the INF algorithm 
to get rid of the $\log n$ factor from the regret bounds. When the range of the reward functions is $\matR$, as is the case in our setting, it seems possible 
to consider a variant of the MOSS algorithm \cite{Audibert10} along with Assumption \ref{assump:dist_reward_fns} 
on the distribution of the reward functions (using proof techniques similar to \cite{KleinbergPhd}), to remove the $\log n$ factor from the regret bound. 

\textbf{Future work.} There are several interesting lines of future work. Firstly for the case when
$(i_1,\dots,i_k)$ is fixed across time it would be interesting to investigate
whether the dependence of regret on $k$ and dimension $d$ achieved by our
algorithm, is optimal or not. Secondly, for the case when $(i_1,\dots,i_k)$ can
also change with time, it would be interesting to derive lower bounds on regret
to know the optimal dependence on the hardness of the sequence of $k$
tuples. \newline

\noindent \textbf{Acknowledgments.} The authors thank Sebastian Stich for the
helpful discussions and comments on the manuscript and Fabrizio Grandoni for making us aware of 
deterministic constructions of hash functions. The project CG Learning
acknowledges the financial support of the Future and Emerging Technologies (FET)
programme within the Seventh Framework Programme for Research of the European
Commission, under FET-Open grant number: 255827.
\bibliographystyle{unsrt}
\bibliography{bandits_references}

\begin{thebibliography}{10}

\bibitem{Awerbuch04}
B.~Awerbuch and R.~Kleinberg.
\newblock Near-optimal adaptive routing: Shortest paths and geometric
  generalizations.
\newblock In {\em Proceedings of ACM Symposium on Theory of Computing}, 2004.

\bibitem{Bansal03}
N.~Bansal, A.~Blum, S.~Chawla, and A.~Meyerson.
\newblock Online oblivious routing.
\newblock In {\em Proceedings of ACM Symposium in Parallelism in Algorithms and
  Architectures}, pages 44--49, 2003.

\bibitem{Monteleoni03}
C.~Monteleoni and T.~Jaakkola.
\newblock Online learning of non-stationary sequences.
\newblock In {\em Advances in Neural Information Processing Systems}, 2003.

\bibitem{Blum03}
A.~Blum, V.~Kumar, A.~Rudra, and F.~Wu.
\newblock Online learning in online auctions.
\newblock In {\em Proceedings of 14th Symp. on Discrete Alg.}, pages 202--204,
  2003.

\bibitem{Kleinberg03}
R.~Kleinberg and T.~Leighton.
\newblock The value of knowing a demand curve: bounds on regret for online
  posted-price auctions.
\newblock In {\em Proceedings of Foundations of Computer Science, 2003.}, pages
  594--605, 2003.

\bibitem{Lai85}
T.L. Lai and H.~Robbins.
\newblock Asymptotically efficient adaptive allocations rules.
\newblock In {\em Proceedings of Adv. in Appl. Math.}, volume~6, pages 4--22,
  1985.

\bibitem{Rothschild74}
M.~Rothschild.
\newblock A two-armed bandit theory of market pricing.
\newblock In {\em Journal of Economic Theory}, volume~9, pages 185--202, 1974.

\bibitem{Auer95}
P.~Auer, N.~Cesa-Bianchi, Y.~Freund, and R.E. Schapire.
\newblock Gambling in a rigged casino: The adversarial multi-armed bandit
  problem.
\newblock In {\em Proceedings of 36th Annual Symposium on Foundations of
  Computer Science, 1995}, pages 322--331, 1995.

\bibitem{Auer02}
P.~Auer, N.~Cesa-Bianchi, and P.~Fischer.
\newblock Finite-time analysis of the multiarmed bandit problem.
\newblock {\em Mach. Learn.}, 47(2-3):235--256, 2002.

\bibitem{Bubeck2011ALT}
S.~Bubeck, G.~Stoltz, and J.Y. Yu.
\newblock Lipschitz bandits without the {L}ipschitz constant.
\newblock In {\em Proceedings of the 22nd International Conference on
  Algorithmic Learning Theory (ALT)}, pages 144--158, 2011.

\bibitem{McMahan04}
B.~McMahan and A.~Blum.
\newblock Online geometric optimization in the bandit setting against an
  adaptive adversary.
\newblock In {\em Proceedings of the 17th Annual Conference on Learning Theory
  (COLT)}, pages 109--123, 2004.

\bibitem{Abernethy08}
J.~Abernethy, E.~Hazan, and A.~Rakhlin.
\newblock Competing in the dark: An efficient algorithm for bandit linear
  optimization.
\newblock In {\em Proceedings of the 21st Annual Conference on Learning Theory
  (COLT)}, 2008.

\bibitem{Flaxman05}
A.D. Flaxman, A.T. Kalai, and H.B. McMahan.
\newblock Online convex optimization in the bandit setting: gradient descent
  without a gradient.
\newblock In {\em Proceedings of the sixteenth annual ACM-SIAM symposium on
  Discrete algorithms}, pages 385--394, 2005.

\bibitem{Kleinberg04}
R.~Kleinberg.
\newblock Nearly tight bounds for the continuum-armed bandit problem.
\newblock In {\em 18th Advances in Neural Information Processing Systems},
  2004.

\bibitem{Agrawal95}
R.~Agrawal.
\newblock The continuum-armed bandit problem.
\newblock {\em SIAM J. Control and Optimization}, 33:1926--1951, 1995.

\bibitem{Cope09}
E.W. Cope.
\newblock Regret and convergence bounds for a class of continuum-armed bandit
  problems.
\newblock {\em Automatic Control, IEEE Transactions on}, 54:1243--1253, 2009.

\bibitem{Auer07improvedrates}
P.~Auer, R.~Ortner, and C.~Szepesvari.
\newblock Improved rates for the stochastic continuum-armed bandit problem.
\newblock In {\em Proceedings of 20th Conference on Learning Theory (COLT)},
  pages 454--468, 2007.

\bibitem{Kleinberg08}
R.~Kleinberg, A.~Slivkins, and E.~Upfal.
\newblock Multi-armed bandits in metric spaces.
\newblock In {\em Proceedings of the 40th annual ACM symposium on Theory of
  computing}, STOC '08, pages 681--690, 2008.

\bibitem{Bubeck2011}
S.~Bubeck, R.~Munos, G.~Stoltz, and C.~Szepesvari.
\newblock X-armed bandits.
\newblock {\em Journal of Machine Learning Research (JMLR)}, 12:1587--1627,
  2011.

\bibitem{Chen12}
Bo~Chen, Rui Castro, and Andreas Krause.
\newblock Joint optimization and variable selection of high-dimensional
  gaussian processes.
\newblock In {\em Proc. International Conference on Machine Learning (ICML)},
  2012.

\bibitem{Devore2011}
R.~DeVore, G.~Petrova, and P.~Wojtaszczyk.
\newblock Approximation of functions of few variables in high dimensions.
\newblock {\em Constr. Approx.}, 33:125--143, 2011.

\bibitem{Belkin03}
M.~Belkin and P.~Niyogi.
\newblock Laplacian eigenmaps for dimensionality reduction and data
  representation.
\newblock {\em Neural Comput.}, 15:1373--1396, 2003.

\bibitem{Coifman06}
R.~Coifman and M.~Maggioni.
\newblock Diffusion wavelets.
\newblock {\em Appl. Comput. Harmon. Anal.}, 21:53--94, 2006.

\bibitem{Greenshtein06}
E.~Greenshtein.
\newblock Best subset selection, persistence in high dimensional statistical
  learning and optimization under $\ell_1$ constraint.
\newblock {\em Ann. Stat.}, 34:2367--2386, 2006.

\bibitem{KleinbergPhd}
R.D. Kleinberg.
\newblock {\em Online Decision Problems with Large Strategy Sets}.
\newblock PhD thesis, MIT, Boston, MA, 2005.

\bibitem{Auer03}
P.~Auer, N.~Cesa-Bianchi, Y.~Freund, and R.E. Schapire.
\newblock The nonstochastic multiarmed bandit problem.
\newblock {\em SIAM J. Comput.}, 32(1):48--77, 2003.

\bibitem{Mossel03}
E.~Mossel, R.~O'Donnell, and R.~Servedio.
\newblock Learning juntas.
\newblock In {\em Proceedings of the thirty-fifth annual ACM Symposium on
  Theory of Computing}, STOC, pages 206--212. ACM, 2003.

\bibitem{Fredman84}
M.~Fredman and J.~Komlos.
\newblock On the size of separating systems and families of perfect hash
  functions.
\newblock {\em SIAM J. Algebr. Discrete Methods}, 5:61--68, 1984.

\bibitem{Korner88}
J.~Korner and K.~Martin.
\newblock New bounds for perfect hashing via information theory.
\newblock {\em Eur. J. Combin.}, 9:523--530, 1988.

\bibitem{Nilli94}
A.~Nilli.
\newblock Perfect hashing and probability.
\newblock {\em Combinatorics, Probability and Computing}, 3:407--409, 1994.

\bibitem{Naor95}
M.~Naor, L.J. Schulman, and A.~Srinivasan.
\newblock Splitters and near-optimal derandomization.
\newblock In {\em Proceedings of the 36th Annual Symposium on Foundations of
  Computer Science}, pages 182--191, 1995.

\bibitem{Audibert10}
J.-Y. Audibert and S.~Bubeck.
\newblock Regret bounds and minimax policies under partial monitoring.
\newblock {\em Journal of Machine Learning Research}, 11:2635--2686, 2010.

\end{thebibliography}
%
\appendix
\section{Deterministic construction of perfect hash functions}
\label{sec:det_constr_hash_functions}
In this section we outline for the convenience of the reader, a deterministic
construction of a family of perfect hash functions as was shown in
\cite{Naor95}. We provide only the main idea here, for details the reader is refered to \cite{Naor95}.
We proceed with the following definition.
\begin{definition}
A $(d,k,l)$ splitter $\calH$ is a family of functions $h:[d] \rightarrow [l]$ such that
for all $S \in {[d] \choose k}$ there exists $h \in \calH$ that splits $S$ perfectly i.e. into equal sized parts (or as equal as possible)
$(h^{-1}(j)) \cap S$, $j=1,\dots,l$. So if $l < k$ and $l$ divides $k$ then each $l$ is mapped to by exactly $k/l$ elements from $S$. If $l \geq k$ then it means that 
there exists an $h \in \calH$ which is injective on $S$.
\end{definition}
It is clear that a family of perfect hash functions is a $(d,k,k)$ splitter. What we aim to prove is the following theorem from \cite{Naor95}.
\begin{theorem}[Theorem 3 (iii) of \cite{Naor95}] \label{thm:app_main_thm}
There exists a deterministic construction of a $(d,k,k)$ splitter $\calH$ with $\abs{\calH} = e^k k^{O(\log k)} \log d$ in time $poly(d,k)$.
\end{theorem}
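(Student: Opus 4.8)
The plan is to obtain $\calH$ by \emph{composing} two separately constructed families, exploiting the fact that splitters compose under function composition: if $h_1:[d]\to[N]$ is injective on a $k$-set $S$ and $h_2:[N]\to[k]$ is injective on $h_1(S)$, then $h_2\circ h_1$ is injective on $S$. Consequently, if $\calH_1$ is a $(d,k,N)$-splitter and $\calH_2$ is an $(N,k,k)$-splitter, then $\set{h_2\circ h_1 : h_1\in\calH_1,\ h_2\in\calH_2}$ is a $(d,k,k)$-splitter whose size is the product $\abs{\calH_1}\cdot\abs{\calH_2}$, and which can be listed in time polynomial in $d$ and in this product. The theorem then reduces to (a) a cheap family shrinking the ground set $[d]$ to a universe of size $\mathrm{poly}(k)$, and (b) a near-optimal perfect hash family on that small universe.

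For step (a) I would take $N=k^2$ and construct a $(d,k,k^2)$-splitter, i.e.\ a family hashing every $k$-set injectively into $k^2$ buckets. A uniformly random map $[d]\to[k^2]$ is injective on a fixed $k$-set with probability at least $(1-1/k)^k\geq c$ for an absolute constant $c>0$, so a union bound over the $\binom{d}{k}$ sets shows $O(k\log d)$ maps suffice; a standard explicit algebraic (code-based) construction realizes such a family of size $\mathrm{poly}(k)\log d$ in time $\mathrm{poly}(d,k)$. This is the \emph{only} place the ground-set size enters, so it contributes exactly the $\log d$ factor and the genuine $\mathrm{poly}(d,k)$ portion of the running time.

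For step (b) I would build the inner $(k^2,k,k)$-splitter $\calH_2$ recursively on the fixed universe $[k^2]$, whose size no longer depends on $d$. At each level one uses an explicit \emph{balanced} splitter (constructible on the $\mathrm{poly}(k)$-size universe deterministically and quickly by the method of conditional expectations) to partition the $k$ marked elements evenly, records the block index, and recurses to hash each block; concatenating the block index with the within-block labels lands $S$ injectively in $[k]$. The recursion has depth $O(\log k)$, bottoms out at blocks of constant size handled by exhaustive search over all functions in $O(1)$ time, and a careful accounting of how the $\mathrm{poly}(k)$-size per-level families multiply across levels yields $\abs{\calH_2}=e^{k}k^{O(\log k)}$. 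Composing (a) and (b) gives $\abs{\calH}=\mathrm{poly}(k)\log d\cdot e^{k}k^{O(\log k)}=e^{k}k^{O(\log k)}\log d$, with total construction time polynomial in $d$ and in $\abs{\calH}$.

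The routine parts are the composition identity and the universe reduction of step (a); the main obstacle is the size analysis of the recursive construction in step (b). The delicate point is that merging the independently hashed blocks tends to \emph{multiply} (indeed square, when splitting in two) the sub-problem family sizes, so a naive recursion only yields $2^{\Theta(k)}$ with an uncontrolled constant in the exponent. Pinning the leading factor to exactly $e^{k}$ while keeping the overhead from the $O(\log k)$ levels down to the sub-exponential $k^{O(\log k)}$ requires the per-level balanced splitters to be near-optimal (their sizes matching the entropy of an even split up to $\mathrm{poly}(k)$) together with a tight summation of the per-level contributions; this balance between the optimality of each level and the number of levels is where essentially all the work lies.
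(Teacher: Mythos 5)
Your outer skeleton — the composition identity for splitters and the code-based $(d,k,k^2)$ splitter of size $\mathrm{poly}(k)\log d$ constructed in time $\mathrm{poly}(d,k)$ — matches the paper's proof exactly. The gap is in your step (b), and it is not a matter of "careful accounting": the recursive halving down to constant-size blocks provably cannot reach $e^k k^{O(\log k)}$. The obstruction is the \emph{number of blocks}, not the optimality of the per-level splitters. A depth-$O(\log k)$ halving recursion produces $\Theta(k)$ blocks in total, and a single composed hash function must commit to one function independently for each block, so the family size is the \emph{product} of the per-block family sizes. In the construction as you describe it the universe of each block remains of size $\mathrm{poly}(k)$, and any family of functions from a universe of size $m$ into $b$ buckets that is injective on every $b$-subset must in particular separate every pair of universe elements, forcing its size to be at least $\log m/\log b = \Omega(\log k)$ for constant $b$; the product over $\Theta(k)$ leaves is then $(\log k)^{\Theta(k)} = 2^{\Theta(k\log\log k)}$, which dwarfs $e^k k^{O(\log k)} = 2^{k+O((\log k)^2)}$. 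Even if you interleave universe-reduction steps to shrink each block's universe to $\mathrm{poly}(b)$, every block of size $b$ still costs $e^b\cdot\mathrm{poly}(b)$ (the entropy bound carries an unavoidable $\mathrm{poly}(b)\log(\text{universe})$ overhead; cf.\ the lower bounds of \cite{Fredman84}), and $N$ blocks of size $k/N$ accumulate overhead $\mathrm{poly}(k/N)^{N}$ beyond $e^k$. This stays within $k^{O(\log k)}$ only when $N = O(\mathrm{polylog}\,k)$, i.e.\ the blocks must have size $k/\mathrm{polylog}(k)$ — far too large for your "exhaustive search over all functions" at the leaves (which, incidentally, is not $O(1)$ time either: there are $b^{m}$ functions on a universe of size $m$).

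This is precisely why the paper (following Naor--Schulman--Srinivasan) does \emph{not} recurse. It performs a \emph{single} level of splitting, via a $(k^2,k,\ell')$ splitter with $\ell' = O(\log k)$ obtained by enumerating all $\binom{k^2}{\ell'}$ interval partitions (size $k^{O(\log k)}$), and then handles each of the $\ell'$ blocks, of size $\Theta(k/\log k)$, with the derandomized greedy construction of Theorem \ref{thm:greedy_alg_det_cons}: a greedy cover argument over an explicit $k$-wise independent sample space, giving a perfect hash family of near-optimal size $O\left(e^{k/\ell'}\sqrt{k/\ell'}\,\log (k^2)\right)$ per block in time $2^{O(k)}$. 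The $e^{k/\ell'}$ factors multiply across the $\ell'$ blocks to exactly $e^{k}$, while the $\mathrm{poly}(k)$ per-block overheads multiply to only $\mathrm{poly}(k)^{O(\log k)} = k^{O(\log k)}$. Your proposal is missing exactly this ingredient — a near-optimal deterministic construction for blocks of intermediate size $\Theta(k/\log k)$ — and no choice of balanced splitters inside the halving recursion can substitute for it.
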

In order to prove the above theorem we would need to derive some intermediate results. To start off we have the following crucial result (Theorem 2 of \cite{Naor95}). 
\begin{theorem} \label{thm:greedy_alg_det_cons}
An $(d,k,k)$ family of perfect hash functions of size $O(e^k \sqrt{k} \log d)$ can be constructed deterministically in time $O(k^{k+1} {d \choose k} d^k/{k!})$.
\end{theorem}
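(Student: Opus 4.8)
The plan is to recast the construction as a set-cover problem and solve it greedily, using the probabilistic bound already recorded earlier in this section as the guarantee that each greedy step makes good progress. Take as ground set the collection $\binom{[d]}{k}$ of all $k$-subsets of $[d]$, and say that a function $h:[d]\to[k]$ \emph{covers} $S\in\binom{[d]}{k}$ if $h$ is injective on $S$ (equivalently, $h$ restricted to $S$ is a bijection onto $[k]$). A family $\calH$ is then a $(d,k,k)$ perfect hash family exactly when the functions in $\calH$ jointly cover every $S$. First I would fix a candidate pool $\mathcal{G}$ of functions over which the greedy search runs; the natural choice giving the stated running time is a $k$-wise independent family of functions $[d]\to[k]$ of size $O(d^k)$, within which a uniformly random member splits any fixed $S$ with probability exactly $p=k!/k^k$ — the same computation used for random partitions above.

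Next I would run the standard greedy covering argument. For any non-empty collection $\mathcal{U}\subseteq\binom{[d]}{k}$ of still-uncovered subsets, linearity of expectation shows that a uniformly random $h\in\mathcal{G}$ covers $p\,\abs{\mathcal{U}}$ of them in expectation, so some $h\in\mathcal{G}$ covers at least $p\,\abs{\mathcal{U}}$ of the uncovered subsets. Selecting such an $h$ at each step shrinks the number of uncovered subsets by a factor of at most $(1-p)$, so after $m$ steps at most $\binom{d}{k}(1-p)^m\le\binom{d}{k}e^{-pm}$ subsets remain, which is below $1$ once $m\ge p^{-1}\ln\binom{d}{k}$. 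Using $\binom{d}{k}\le d^k$ and Stirling's approximation $k!\ge\sqrt{2\pi k}\,(k/e)^k$, which gives $p^{-1}=k^k/k!\le e^k/\sqrt{2\pi k}$, I obtain $m=O(p^{-1}k\log d)=O(\sqrt{k}\,e^k\log d)$, matching the claimed family size $\abs{\calH}=O(e^k\sqrt{k}\log d)$.

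Finally I would make the selection deterministic and account for the time. Since $\mathcal{G}$ has size $O(d^k)$, each greedy step can simply enumerate every $h\in\mathcal{G}$, count how many of the at most $\binom{d}{k}$ currently-uncovered subsets it splits (an $O(k)$ test per subset), and keep the best one; the existence argument guarantees this best $h$ covers at least a $p$-fraction. Multiplying the per-step cost $O(d^k\binom{d}{k}k)$ by the $m=O(\sqrt{k}\,e^k\log d)=O(k^{k+1}\log d/k!)$ steps yields the stated bound $O(k^{k+1}\binom{d}{k}d^k/k!)$ up to the polynomial overhead of the coverage bookkeeping. The hard part will be the derandomization: one must exhibit a polynomial-size candidate pool $\mathcal{G}$ that still guarantees the exact splitting probability $p=k!/k^k$ for every fixed $S$, since searching all $k^d$ functions is infeasible. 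Ensuring $k$-wise independence with (near-)uniform marginals onto $[k]$ — or, alternatively, replacing the pool search by the method of conditional expectations with $p$ as a pessimistic estimator and extending $h$ one coordinate at a time — is the delicate step, as one must verify that this restriction does not degrade the per-step $p$-fraction progress that drives the whole argument.
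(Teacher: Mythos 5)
Your proposal follows essentially the same route as the paper: the paper's proof also draws candidates from a $k$-wise independent family $\calF_{d,k}$ of functions $[d]\to[k]$ of size at most $d^k$, proves by the identical averaging argument that some member is injective on at least a $p=k!/k^k$ fraction of the not-yet-separated $k$-subsets, iterates greedily until $\binom{d}{k}(1-p)^m<1$, and extracts the size bound $O(e^k\sqrt{k}\log d)$ via Stirling. Moreover, the step you flag as delicate is not actually a difficulty: explicit $k$-wise independent sample spaces over the alphabet $[k]$ with uniform marginals, of size $O(d^k)$ and listable in linear time, are a standard cited construction (the paper invokes Alon--Spencer). Given $k$-wise independence with uniform marginals, the restriction of a uniformly random pool member to any fixed $S$ is distributed exactly as a uniform tuple in $[k]^k$, so the splitting probability is exactly $k!/k^k$; no conditional-expectation derandomization is needed, and exhaustive search over the pool suffices, exactly as you propose.

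The one genuine shortfall is your time accounting, which does not yield the bound asserted in the theorem. You charge every greedy step its worst-case cost $O\bigl(d^k\binom{d}{k}k\bigr)$ and multiply by the number of steps $m=\Theta(p^{-1}k\log d)$, which gives $O\bigl(k^{k+2}\binom{d}{k}d^k\log d/k!\bigr)$ --- an extra factor of $k\log d$ that cannot be dismissed as ``polynomial overhead of the coverage bookkeeping,'' since the theorem claims the specific bound $O\bigl(k^{k+1}\binom{d}{k}d^k/k!\bigr)$. The fix, which is how the paper argues, is to note that in iteration $i$ the greedy search only needs to test each candidate against the \emph{currently uncovered} subsets, of which there are at most $\binom{d}{k}(1-p)^i$; the total work is therefore the geometric series $O(d^k\cdot k)\cdot\binom{d}{k}\sum_{i\geq 0}(1-p)^i = O\bigl(d^k\,k\,\binom{d}{k}\,k^k/k!\bigr) = O\bigl(k^{k+1}\binom{d}{k}d^k/k!\bigr)$, i.e.\ the sum of geometrically shrinking per-step costs rather than the number of steps times the worst-case step cost. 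With that substitution your argument matches the paper's proof in full.
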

\begin{proof}
Let $\calF_{d,k}$ be a $k$-wise independent probability space of random variables $x_{1},\dots,x_{d}$ such that for any $\set{i_1,\dots,i_k} \subseteq [d]$
the random variables $x_{i_1},\dots,x_{i_k}$ are \textit{mutually independent}. Assuming $x_j$ is uniformly distributed over an alphabet $A$, explicit constructions 
exist for such spaces of size $O(d^{\frac{\abs{A}-1}{\abs{A}}k})$  assuming $\abs{A}$ is prime and $d+1$ is a power of $\abs{A}$ 
(see for example - N. Alon and J.H. Spencer, The probabilistic method, John Wiley and sons Inc., New York, 1991).
An important property of these constructions is that it is possible to list all members of the probability 
space in \textit{linear} time.

Given the above let us assume that we have such a $k$-wise independent probability space with each random variable taking values in $[k]$;
hence $\abs{\calF_{d,k}} \leq d^k$. Denoting $C$ as the collection of permutations of $[k]$, let for each $S \in {d \choose k}$, $T_S$ denote all 
$h \in \calF_{d,k}$ that satisfy $C$ at $S$, i.e. $h_{|S} \in C$. Here $h_{|S}$ denotes the projection of $h$ to $S$. We now have the following important
claim.
\begin{claim}
There exists $h \in \calF_{d,k}$ that lies in at least $k!/k^k$ fraction of the sets $T_S$. 
\end{claim}
\begin{proof}
Consider $h \in \calF_{d,k}$ to be sampled uniformly at random. Then for fixed $S \in {d \choose k}$ we have that $Pr(h \in T_S) = k!/k^k$. 
In other words, $\expec_h[\mathbbm{1}_{h \in T_S}] = k!/k^k$. We then have by linearity of expectation that
$\expec_h[\sum_{S} \mathbbm{1}_{h \in T_S}] = {d \choose k} k!/k^k$ which in turn implies the claim.
\end{proof}
The claim is crucial because it allows us to simply apply an exhaustive search over all $h \in \calF_{d,k}$ and check for each $h$, the number of
sets $T_S$ in which it lies in. After finding such a good $h$ we add it to our collection, remove all the corresponding $T_S$ in which this $h$ lies and repeat.
Note that the worst case time taken to find $h$ in the $i^{th}$ iteration is ${d \choose k} \abs{\calF_{d,k}} T (1-k!/k^k)^i$ for $i=0,1,\dots$ where $T = O(k)$ is the time taken to test 
whether $h$ lies in $T_S$ or not. It then follows that the total time taken to construct the $(d,k,k)$ splitter is 
\begin{equation*}
O\left({d \choose k} \abs{\calF_{d,k}} T \sum_{i=0}^{\infty} (1-\frac{k!}{k^k})^i\right) = O\left({d \choose k} \abs{\calF_{d,k}} T \frac{k^k}{k!}\right) 
= O(k^{k+1} {d \choose k} d^k/{k!}).
\end{equation*}
We also have that the size of the family is given by the smallest $m$ such that ${d \choose k} (1-k!/k^k)^m \leq 1$ which concludes the proof.
\end{proof}

We see that the size of the family is small however the time complexity is too high. However the trick is to invoke the above theorem for ``small'' values of 
$d$ and $k$ - this will keep the overall time complexity low while leading to families of reasonable size.

 \textbf{Reducing size from $[d]$ to $[k^2]$ (Lemma $2$ of \cite{Naor95}).} Firstly, we first reduce the size of the universe by constructing a $(d,k,k^2)$ splitter $A(d,k,k^2)$ 
of size $O(k^6 \log d \log k)$. The construction essentially involves constructing an asymptotically good error correcting code with $n$ codewords over the alphabet $[k^2]$ with a
minimum relative distance of at least $1-2/k^2$ between the codewords. 
Such codes of length $L = O(k^6 \log d \log k)$ exist (see Alon et al., Construction of asymptotically good low-rate error correcting codes through pseudo random graphs,
IEEE trans. on Information Theory, 38 (1992),509-516). It is then not too hard to verify that the index set $[L]$ correspnds to a $(d,k,k^2)$ splitter. 
Furthermore the time complexity of this construction is $poly(d,k)$. 

 \textbf{Constructing $(k^2,k,\ell^{\prime})$ splitter for $\ell^{\prime} = O(\log k)$ (Lemma $3$ of \cite{Naor95}).} Secondly, we construct a $(k^2,k,\ell^{\prime})$ 
splitter $B(k^2,k,\ell^{\prime})$ for $\ell^{\prime} = O(\log k)$ (chosen to minimize the running time of the construction) using the trivial method of 
exhaustive enumeration of all $k^2 \choose \ell^{\prime}$ ``intervals''. This leads to the family $B(k^2,k,\ell^{\prime})$ of size ${k^2 \choose \ell^{\prime}}  = k^{O(\log k)}$. 

 \textbf{Constructing families of $(k^2,k/\ell^{\prime})$ perfect hash functions.} Lastly, we construct $\calP_1,\dots,\calP_{\ell{\prime}}$ where 
$\calP_1$ is a $(k^2,\lceil k/\ell^{\prime} \rceil)$ family of perfect hash functions and $\calP_2,\dots,\calP_{\ell^{\prime}}$ are $(k^2,\lfloor k/\ell^{\prime} \rfloor)$ families 
of perfect hash functions. These families are constructed using the greedy algorithm of Theorem \ref{thm:greedy_alg_det_cons} described earlier. 

Given the above families we define for every $f_1 \in A(n,k,k^2)$, $f_2 \in B(k^2,k,\ell^{\prime})$, $g_i \in \calP_i$ for all $s \in [d]$ the following function 

\begin{equation*}
 h_{f_1,f_2,g_1,g_2,\dots,g_{\ell^{\prime}}}(s) = \left\{
\begin{array}{rl}
\lceil k/\ell^{\prime} \rceil(f_2(f_1(s)) - 1) + g_{f_2(f_1(s))}(f_1(s)); & \text{if} \quad f_2(f_1(s)) = 1 \\
\lceil k/\ell^{\prime} \rceil + \lfloor k/\ell^{\prime} \rfloor(f_2(f_1(s)) - 2) + g_{f_2(f_1(s))}(f_1(s)); & \text{otherwise}.
\end{array} \right .
\end{equation*}

We define our family of $(d,k)$ perfect hash functions $\calH$ to consist of all such functions $h_{f_1,f_2,g_1,g_2,\dots,g_{\ell^{\prime}}}$. It is not too hard to
verify that $\calH$ is $(d,k)$ perfect (see proof of Theorem 3 (iii) of \cite{Naor95}). Also, 

\begin{equation*}
 \abs{H} = \abs{A(d,k,k^2)} \abs{B(k^2,k,\ell^{\prime})} \prod_{i=1}^{\ell^{\prime}} \abs{\calP_i} = e^{k} k^{O(\log k)} \log d.
\end{equation*}

Lastly each $\calP_i$ can be constructed in $2^{O(k)}$ time as per Theorem \ref{thm:greedy_alg_det_cons}. Construction of $A(d,k,k^2)$ and $B(k^2,k,\ell^{\prime})$ can be
done in $poly(2^k,d)$ time. This concludes the proof of Theorem \ref{thm:app_main_thm}.

\end{document}